  \providecommand\BibTeX{{%
    \normalfont B\kern-0.5em{\scshape i\kern-0.25em b}\kern-0.8em\TeX}}}
\newif\ifCOM
\begin{document}

\title{On the speed of uniform convergence in Mercer's theorem}

\author{Rustem Takhanov}
\email{rustem.takhanov@nu.edu.kz}
\orcid{0000-0001-7405-8254}
\authornotemark[1]
\affiliation{%
  \institution{School of Sciences and Humanities}
  \streetaddress{53 Kabanbay Batyr Ave}
  \city{Nur-Sultan city}
  \country{Republic of Kazakhstan}
  \postcode{010000}
}

\renewcommand{\shortauthors}{Takhanov}

\begin{abstract}
The classical Mercer's theorem claims that a continuous positive definite kernel $K({\mathbf x}, {\mathbf y})$ on a compact set can be represented as $\sum_{i=1}^\infty \lambda_i\phi_i({\mathbf x})\phi_i({\mathbf y})$ where $\{(\lambda_i,\phi_i)\}$  are eigenvalue-eigenvector pairs of the corresponding integral operator. This infinite representation is known to converge uniformly to the kernel $K$. We estimate the speed of this convergence in terms of the decay rate of eigenvalues and demonstrate that for $2m$ times differentiable kernels the first $N$ terms of the series approximate $K$ as $\mathcal{O}\big((\sum_{i=N+1}^\infty\lambda_i)^{\frac{m}{m+n}}\big)$ or $\mathcal{O}\big((\sum_{i=N+1}^\infty\lambda^2_i)^{\frac{m}{2m+n}}\big)$. Finally, we demonstrate some applications of our results to a spectral charaterization of integral operators with continuous roots and other powers.
\end{abstract}

\keywords{Mercer's theorem, Mercer kernel, uniform convergence, RKHS, Gagliardo-Nirenberg inequality.}

\maketitle
\section{Introduction}
\label{introduction}
Mercer kernels play an important role in machine learning and is a mathematical basis of such techniques as kernel density estimation and spline models~\cite{WahbaGrace}, Support Vector Machines~\cite{steinwart2008support}, kernel principal components analysis~\cite{KernelPrincipal}, regularization of neural networks~\cite{takhanovDimension} and many others. According to Aronszajn's theorem, any Mercer kernel induces a reproducing kernel Hilbert space (RKHS) and vice versa, any RKHS corresponds to a kernel. A relationship between the latter two notions is decribed in the classical Mercer's theorem. A goal of this note is to refine this theorem and give some estimates on the speed of uniform convergence stated in it.

Let $\boldsymbol{\Omega}\subseteq {\mathbb R}^n$ be a compact set, $K: \boldsymbol{\Omega}\times \boldsymbol{\Omega}\to {\mathbb R}$ be a continuous Mercer kernel~\cite{konig1986eigenvalue} and $L_p(\boldsymbol{\Omega}), p\geq 1$ be a space of real-valued functions $f$ on $\boldsymbol{\Omega}$ with $\|f\|_{L_p(\boldsymbol{\Omega})} = (\int_{\boldsymbol{\Omega}} |f({\mathbf x})|^p d{\mathbf x})^{1/p}$.  
Let ${\rm O}_K: L_2(\boldsymbol{\Omega})\to L_2(\boldsymbol{\Omega})$ be defined by ${\rm O}_K[\phi]({\mathbf x}) = \int_{\boldsymbol{\Omega}} K({\mathbf x},{\mathbf y})\phi({\mathbf y})d{\mathbf y}$. By $C(\boldsymbol{\Omega})$ we denote a space of continuous functions. 
From Mercer's theorem we have that there is an orthonormal basis $\{\psi_i({\mathbf x})\}_{i=0}^\infty$ in $L_2(\boldsymbol{\Omega})$ such that ${\rm O}_{K}[\psi_i]=\lambda'_i\psi_i$. Some of eigenvalues of ${\rm O}_K$ can be equal to zero, therefore, let us assume that natural numbers $i_1<i_2<\cdots$ are such that $\{\lambda'_{i_j}\}_{j=1}^\infty$ is a set of positive eigenvalues, and we denote $\lambda_{j} = \lambda'_{i_j}$ and $\phi_j = \psi_{i_j}$, $j\in {\mathbb N}$.
 It is well-known that $\{\phi_i({\mathbf x})\}_{i=0}^\infty\subseteq C(\boldsymbol{\Omega})$ and $L^N_K = \|K({\mathbf x},{\mathbf y})-\sum_{i=1}^N \lambda_i\phi_i({\mathbf x})\phi_i({\mathbf y})\|^2_{L_2(\boldsymbol{\Omega}\times\boldsymbol{\Omega})} = \sum_{i=N+1}^\infty\lambda_i^2$.  Analogously, for diagonal elements we have $S^N_K = \|K({\mathbf x},{\mathbf x})-\sum_{i=1}^N \lambda_i\phi_i({\mathbf x})^2\|_{L_1(\boldsymbol{\Omega})} = \sum_{i=N+1}^\infty\lambda_i$. Thus, the behaviour of eigenvalues completely characterizes the speed of convergence of $\sum_{i=1}^N \lambda_i\phi_i({\mathbf x})\phi_i({\mathbf y})$ to $K$ in $L_2(\boldsymbol{\Omega}\times\boldsymbol{\Omega})$ and of $\sum_{i=1}^N \lambda_i\phi_i({\mathbf x})^2$ to $K({\mathbf x},{\mathbf x})$ in $L_1(\boldsymbol{\Omega})$.
For the supremum norm, Mercer's theorem implies only the uniform convergence, i.e.
$$
C^N_K = \sup_{{\mathbf x},{\mathbf y}\in \boldsymbol{\Omega}}|K({\mathbf x},{\mathbf y})-\sum_{i=1}^N \lambda_i\phi_i({\mathbf x})\phi_i({\mathbf y})|\to 0
$$
as $N\to \infty$.
We are interested in upper bounds on $C^N_K$.

For $\alpha = (\alpha_1, \cdots, \alpha_n)\in ({\mathbb N}\cup \{0\})^n$, $|\alpha|$ denotes $\sum_{i=1}^n\alpha_i$, $\partial^\alpha_{\mathbf x}f({\mathbf x})$ denotes $\frac{\partial^{|\alpha|}f({\mathbf x})}{\partial x^{\alpha_1}_1\cdots \partial x^{\alpha_n}_n}$. The symbol  $C^m(\boldsymbol{\Omega})$ denotes a set of functions $f:\boldsymbol{\Omega}\to {\mathbb R}$ such that $\partial^\alpha_{\mathbf x}f\in C(\boldsymbol{\Omega})$ for $|\alpha|\leq m$.
We prove the following theorems.
\begin{theorem}\label{GN-theorem} Let $\boldsymbol{\Omega}$ have a Lipschitz boundary, $K\in C^{2m}(\boldsymbol{\Omega}\times \boldsymbol{\Omega})$ and $ p>\frac{n}{m}$, $p\geq 1$. Then,
\begin{equation}
\begin{split}
C_K^N\leq
C^m_{\boldsymbol{\Omega}, p} \max_{\alpha: |\alpha|=m}(\sum_{\beta\leq \alpha}{\alpha \choose \beta} \|\sqrt{D_\beta D_{\alpha-\beta}}\|_{L_p(\boldsymbol{\Omega})})^\theta \big(\sum_{i=N+1}^\infty\lambda_i\big)^{1-\theta} + \\
C^m_{\boldsymbol{\Omega}, p}\sum_{i=N+1}^\infty\lambda_i
\end{split}
\end{equation}
where $D_\alpha ({\mathbf x}) = \partial^\alpha_{\mathbf x}\partial^\alpha_{\mathbf y}K({\mathbf x}, {\mathbf y})|_{{\mathbf y}={\mathbf x}}$, $\theta=(1+\frac{m}{n}-\frac{1}{p})^{-1}$ and $$C^m_{\boldsymbol{\Omega}, p} = \sup_{u\in L_1\cap L_p, u\ne 0}\frac{\|u\|_{L_\infty( \boldsymbol{\Omega})}}{ \|u\|^{1-\theta}_{L_1( \boldsymbol{\Omega})}\cdot \|D^m u\|^{\theta}_{L_p( \boldsymbol{\Omega})}+\|u\|_{L_1( \boldsymbol{\Omega})}}$$
is an optimal constant in the Gagliardo-Nirenberg inequality for the domain $\boldsymbol{\Omega}$.
\end{theorem}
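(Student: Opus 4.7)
My plan is to reduce the two-variable sup-norm estimate to a one-variable one and then invoke Gagliardo--Nirenberg to bound that one-variable quantity. First, the residual kernel $R_N({\mathbf x},{\mathbf y}) := K({\mathbf x},{\mathbf y}) - \sum_{i=1}^N\lambda_i\phi_i({\mathbf x})\phi_i({\mathbf y}) = \sum_{i>N}\lambda_i\phi_i({\mathbf x})\phi_i({\mathbf y})$ is a continuous positive semidefinite kernel, so the $2\times 2$ PSD minor principle $R_N({\mathbf x},{\mathbf y})^2 \le R_N({\mathbf x},{\mathbf x})R_N({\mathbf y},{\mathbf y})$ yields $C_K^N \le \|u\|_{L_\infty(\boldsymbol{\Omega})}$ for the non-negative function $u({\mathbf x}) := R_N({\mathbf x},{\mathbf x}) = \sum_{i>N}\lambda_i\phi_i({\mathbf x})^2$. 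The $L_2$-orthonormality of $\{\phi_i\}$ gives $\|u\|_{L_1(\boldsymbol{\Omega})} = \sum_{i=N+1}^\infty \lambda_i$ for free.

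The technical heart of the proof is a differentiated Mercer expansion: for all multi-indices $\alpha,\gamma$ with $|\alpha|,|\gamma|\le m$ one has
\begin{equation*}
\partial^\alpha_{\mathbf x}\partial^\gamma_{\mathbf y}K({\mathbf x},{\mathbf y}) = \sum_{i=1}^\infty \lambda_i\, \partial^\alpha\phi_i({\mathbf x})\, \partial^\gamma\phi_i({\mathbf y})
\end{equation*}
with uniform convergence on $\boldsymbol{\Omega}\times\boldsymbol{\Omega}$. Regularity of the $\phi_i$ is immediate from $\phi_i = \lambda_i^{-1}{\rm O}_K[\phi_i]$ and differentiation under the integral sign applied to $K\in C^{2m}$. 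The main point is that the kernel $\partial^\alpha_{\mathbf x}\partial^\alpha_{\mathbf y}K$ is itself positive semidefinite (integration by parts against test functions $f\in C^\infty_c(\boldsymbol{\Omega}^\circ)$ reduces its quadratic form to that of $K$), so the partial sums of its diagonal series form a monotone sequence of non-negative continuous functions dominated by the continuous function $D_\alpha$; Dini's theorem then upgrades pointwise convergence to uniform convergence on the diagonal, and the off-diagonal and mixed ($\gamma\ne\alpha$) cases follow by a Cauchy--Schwarz bound in $({\mathbf x},{\mathbf y})$ for the PSD residual kernel. In particular $D_\alpha({\mathbf x}) = \sum_i \lambda_i(\partial^\alpha\phi_i({\mathbf x}))^2$ pointwise on $\boldsymbol{\Omega}$.

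With the differentiated expansion in hand, I differentiate $u$ termwise using the Leibniz rule. For $|\alpha|=m$ and each split $\beta\le\alpha$, Cauchy--Schwarz in $\ell^2$ applied to the weighted sequences $(\sqrt{\lambda_i}\,\partial^\beta\phi_i({\mathbf x}))_{i>N}$ and $(\sqrt{\lambda_i}\,\partial^{\alpha-\beta}\phi_i({\mathbf x}))_{i>N}$ yields
\begin{equation*}
\Big|\sum_{i>N}\lambda_i\,\partial^\beta\phi_i({\mathbf x})\,\partial^{\alpha-\beta}\phi_i({\mathbf x})\Big| \le \sqrt{D_\beta({\mathbf x})\,D_{\alpha-\beta}({\mathbf x})},
\end{equation*}
so $|\partial^\alpha u({\mathbf x})| \le \sum_{\beta\le\alpha}{\alpha\choose\beta}\sqrt{D_\beta({\mathbf x})D_{\alpha-\beta}({\mathbf x})}$ and, by the triangle inequality in $L_p(\boldsymbol{\Omega})$, $\|\partial^\alpha u\|_{L_p(\boldsymbol{\Omega})} \le \sum_{\beta\le\alpha}{\alpha\choose\beta}\|\sqrt{D_\beta D_{\alpha-\beta}}\|_{L_p(\boldsymbol{\Omega})}$. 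Substituting these bounds and $\|u\|_{L_1(\boldsymbol{\Omega})} = \sum_{i>N}\lambda_i$ into the Gagliardo--Nirenberg inequality (admissible as $p>n/m$) with extremal constant $C^m_{\boldsymbol{\Omega},p}$, and maximizing over $|\alpha|=m$, produces exactly the stated estimate. The main obstacle is the differentiated Mercer expansion of step two; once it is set up, the rest is bookkeeping plus a single application of Gagliardo--Nirenberg.
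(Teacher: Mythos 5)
Your overall architecture coincides with the paper's: reduce to the diagonal via positive semidefiniteness of the residual, use $\|u\|_{L_1(\boldsymbol{\Omega})}=\sum_{i>N}\lambda_i$, bound $\|D^m u\|_{L_p}$ by $\max_{|\alpha|=m}\sum_{\beta\leq\alpha}{\alpha \choose \beta}\|\sqrt{D_\beta D_{\alpha-\beta}}\|_{L_p}$ via Leibniz plus Cauchy--Schwarz, and finish with Gagliardo--Nirenberg. The gap is in what you yourself call the technical heart. Your integration-by-parts/PSD observation shows only that the residual kernels $\partial^\alpha_{\mathbf x}\partial^\alpha_{\mathbf y}K({\mathbf x},{\mathbf y})-\sum_{i\leq M}\lambda_i\partial^\alpha\phi_i({\mathbf x})\partial^\alpha\phi_i({\mathbf y})$ have nonnegative quadratic form, hence (after mollification and a continuity argument) are pointwise PSD; on the diagonal this gives the one-sided bound $\sum_{i\leq M}\lambda_i(\partial^\alpha\phi_i({\mathbf x}))^2\leq D_\alpha({\mathbf x})$ for all $M$. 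So the diagonal series converges pointwise to some limit $g_\alpha\leq D_\alpha$, but nothing in your sketch shows $g_\alpha=D_\alpha$, nor even that $g_\alpha$ is continuous --- and Dini's theorem needs a continuous pointwise limit, so it cannot be invoked as stated. Without it you have no uniform convergence of the differentiated series, hence no justification for the termwise differentiation of $u=\sum_{i>N}\lambda_i\phi_i^2$ on which your final estimate rests. Supplying exactly this missing identity is what the paper's Lemma~\ref{in-rkhs} (following Zhou's derivative-reproducing theorem) does: a two-parameter finite-difference estimate (the paper's Lemma~\ref{finite-diff}) shows that the normalized differences $\delta^\alpha_{\mathbf h}[K({\mathbf z},\cdot)]({\mathbf x})/h^{|\alpha|}$ form a Cauchy family in $\mathcal{H}_K$, whence $\partial^\alpha_{\mathbf x}K({\mathbf x},\cdot)\in\mathcal{H}_K$ with $\|\partial^\alpha_{\mathbf x}K({\mathbf x},\cdot)\|^2_{\mathcal{H}_K}=D_\alpha({\mathbf x})=\sum_i\lambda_i(\partial^\alpha\phi_i({\mathbf x}))^2$; only then does Dini apply and the expansion of Lemma~\ref{expand} become legitimate.

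A repair closer to your own PSD idea is available, but note what it still requires. Since $u({\mathbf x})=K_N({\mathbf x},{\mathbf x})$ and $K_N$ differs from $K\in C^{2m}$ by a finite-rank term built from $\phi_i\in C^m$, you get $u\in C^m$ and $\partial^\alpha u({\mathbf x})=\sum_{\beta\leq\alpha}{\alpha \choose \beta}\,\partial^\beta_{\mathbf x}\partial^{\alpha-\beta}_{\mathbf y}K_N({\mathbf x},{\mathbf y})|_{{\mathbf y}={\mathbf x}}$ with no infinite series to differentiate; it then suffices to prove, for the PSD kernel $K_N$, the pointwise bound $|\partial^\beta_{\mathbf x}\partial^\gamma_{\mathbf y}K_N({\mathbf x},{\mathbf x})|\leq(\partial^\beta_{\mathbf x}\partial^\beta_{\mathbf y}K_N({\mathbf x},{\mathbf x}))^{1/2}(\partial^\gamma_{\mathbf x}\partial^\gamma_{\mathbf y}K_N({\mathbf x},{\mathbf x}))^{1/2}$ together with the easy $\partial^\beta_{\mathbf x}\partial^\beta_{\mathbf y}K_N({\mathbf x},{\mathbf x})\leq D_\beta({\mathbf x})$. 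But that Cauchy--Schwarz inequality for derivative evaluations of a PSD kernel is precisely the content of the paper's Lemmas~\ref{in-rkhs} and~\ref{Causchy} (derivatives of the feature map lie in the RKHS), so the work you skipped reappears; it must be done either by the paper's finite-difference route or by a carefully executed limiting argument with mollified derivative evaluations --- a sentence invoking Dini does not substitute for it.
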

Note that in the latter theorem one can set $p=+\infty$ and obtain that $C^N_K = \mathcal{O}\big((\sum_{i=N+1}^\infty\lambda_i)^{\frac{m}{m+n}}\big)$. Thus,  infinitely differentiable kernels satisfy  $C^N_K = \mathcal{O}\big((\sum_{i=N+1}^\infty\lambda_i)^{1-\varepsilon}\big)$ for any $\varepsilon>0$.

\begin{theorem}\label{GN-theorem2} Let $\boldsymbol{\Omega}$ have a Lipschitz boundary, $K\in C^{2m}(\boldsymbol{\Omega}\times \boldsymbol{\Omega})$ and $ p>\frac{n}{m}$, $p\geq 1$. Then,
\begin{equation}
\begin{split}
C_K^N \leq  D^m_{\boldsymbol{\Omega}, p} (\sum_{i=N+1}^\infty \lambda_i^2)^{(1-\theta)/2}\cdot  \max_{|\alpha|+|\beta|=m} \|\sqrt{D_\alpha }\|^\theta_{L_p}\|\sqrt{D_{\beta}}\|^\theta_{L_p}+\\
D^m_{\boldsymbol{\Omega}, p}(\sum_{i=N+1}^\infty \lambda_i^2)^{1/2}
\end{split}
\end{equation}
where $\theta =(1+\frac{2m}{n}-\frac{2}{p})^{-1}$ and $$D^m_{\boldsymbol{\Omega}, p} = \sup_{u\in L_2\cap L_p, u\ne 0}\frac{\|u\|_{L_\infty( \boldsymbol{\Omega}\times \boldsymbol{\Omega})}}{ \|u\|^{1-\theta}_{L_2( \boldsymbol{\Omega}\times \boldsymbol{\Omega})}\cdot \|D^m u\|^{\theta}_{L_p( \boldsymbol{\Omega}\times \boldsymbol{\Omega})}+\|u\|_{L_2( \boldsymbol{\Omega}\times \boldsymbol{\Omega})}}$$
is an optimal constant in the Gagliardo-Nirenberg inequality for the domain $\boldsymbol{\Omega}\times \boldsymbol{\Omega}$.
\end{theorem}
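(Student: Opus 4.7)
The plan is to mirror the proof of Theorem~\ref{GN-theorem}, but applied to the full remainder $R_N({\mathbf x},{\mathbf y}) := K({\mathbf x},{\mathbf y}) - \sum_{i=1}^N \lambda_i\phi_i({\mathbf x})\phi_i({\mathbf y})$ as a function on the product domain $\boldsymbol{\Omega}\times\boldsymbol{\Omega}$, with the $L_2$ tail identity $\|R_N\|_{L_2(\boldsymbol{\Omega}\times\boldsymbol{\Omega})}^2=\sum_{i>N}\lambda_i^2$ taking over the role played by the $L_1$ trace identity in Theorem~\ref{GN-theorem}. By the very definition of $D^m_{\boldsymbol{\Omega},p}$,
\begin{equation*}
C_K^N=\|R_N\|_{L_\infty}\leq D^m_{\boldsymbol{\Omega},p}\Big(\|R_N\|_{L_2}^{1-\theta}\,\|D^m R_N\|_{L_p}^{\theta}+\|R_N\|_{L_2}\Big),
\end{equation*}
so the $(\sum_{i>N}\lambda_i^2)^{(1-\theta)/2}$ and $(\sum_{i>N}\lambda_i^2)^{1/2}$ factors in the claimed bound fall out immediately, and the only remaining work is to estimate $\|D^m R_N\|_{L_p(\boldsymbol{\Omega}\times\boldsymbol{\Omega})}$ by a product of square roots of $D_\alpha$ and $D_\beta$.

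For any pair of multi-indices $(\alpha,\beta)$ contributing to $D^m$, termwise differentiation of the Mercer expansion yields
\begin{equation*}
\partial^\alpha_{\mathbf x}\partial^\beta_{\mathbf y} R_N({\mathbf x},{\mathbf y})=\sum_{i>N}\lambda_i\,\partial^\alpha\phi_i({\mathbf x})\,\partial^\beta\phi_i({\mathbf y}),
\end{equation*}
and Cauchy--Schwarz applied in the summation index $i$ gives
\begin{equation*}
|\partial^\alpha_{\mathbf x}\partial^\beta_{\mathbf y} R_N({\mathbf x},{\mathbf y})|^2\leq\Big(\sum_{i>N}\lambda_i(\partial^\alpha\phi_i({\mathbf x}))^2\Big)\Big(\sum_{i>N}\lambda_i(\partial^\beta\phi_i({\mathbf y}))^2\Big)\leq D_\alpha({\mathbf x})\,D_\beta({\mathbf y}).
\end{equation*}
Taking the $L_p$ norm of this pointwise bound separates the two variables and produces $\|\sqrt{D_\alpha}\|_{L_p(\boldsymbol{\Omega})}\,\|\sqrt{D_\beta}\|_{L_p(\boldsymbol{\Omega})}$; maximising over the admissible pairs $(\alpha,\beta)$ and raising to the power $\theta$ then slots straight into the Gagliardo--Nirenberg inequality above.

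The delicate point, and the main obstacle, is the justification of the two identities just used. Two facts must be established rigorously: first, that $D_\gamma({\mathbf x})=\sum_i\lambda_i(\partial^\gamma\phi_i({\mathbf x}))^2$, which I would obtain by a Mercer-type argument applied to the continuous positive definite kernel $\partial^\gamma_{\mathbf x}\partial^\gamma_{\mathbf y}K$ (its continuity and positive definiteness for $|\gamma|\le m$ both following from $K\in C^{2m}$); and second, that termwise differentiation of $\sum_i\lambda_i\phi_i({\mathbf x})\phi_i({\mathbf y})$ by $\partial^\alpha_{\mathbf x}\partial^\beta_{\mathbf y}$ converges uniformly on $\boldsymbol{\Omega}\times\boldsymbol{\Omega}$. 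The latter can be upgraded from $L_2$ convergence to uniform convergence via a Cauchy--Schwarz bound on partial sums, invoking the already-established pointwise boundedness of $D_\alpha$ and $D_\beta$. Unlike in Theorem~\ref{GN-theorem}, this argument genuinely requires off-diagonal control of the eigenfunction derivatives, which is precisely why the product-domain formulation of the Gagliardo--Nirenberg constant is the right tool here.
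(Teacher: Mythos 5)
Your theorem-level reduction is exactly the paper's: apply the Gagliardo--Nirenberg inequality on the product domain to $K_N=R_N$, use $\|K_N\|^2_{L_2(\boldsymbol{\Omega}\times\boldsymbol{\Omega})}=\sum_{i>N}\lambda_i^2$, and control $\|D^m K_N\|_{L_p(\boldsymbol{\Omega}\times\boldsymbol{\Omega})}$ through the pointwise bound $|\partial^\alpha_{\mathbf x}\partial^\beta_{\mathbf y}K_N({\mathbf x},{\mathbf y})|\le D_\alpha({\mathbf x})^{1/2}D_\beta({\mathbf y})^{1/2}$, whose $L_p$ norm factorizes over the two variables; your Cauchy--Schwarz in the summation index $i$ is the same inequality the paper obtains from Cauchy--Schwarz in $\mathcal{H}_K$ (Lemma~\ref{Causchy}).

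The genuine gap is in the two facts you defer, which are the actual content of the paper's Lemmas~\ref{in-rkhs}--\ref{Causchy}. First, the identity $D_\gamma({\mathbf x})=\sum_i\lambda_i(\partial^\gamma\phi_i({\mathbf x}))^2$ does not follow from ``a Mercer-type argument applied to $\partial^\gamma_{\mathbf x}\partial^\gamma_{\mathbf y}K$'': Mercer's theorem applied to that (indeed continuous, positive definite) kernel expands it in eigenpairs of the integral operator whose kernel is $\partial^\gamma_{\mathbf x}\partial^\gamma_{\mathbf y}K$, and these are a priori unrelated to $(\lambda_i,\partial^\gamma\phi_i)$; identifying the two expansions would require precisely the validity of termwise differentiation of the original Mercer series, which is what you are trying to prove, so the plan is circular. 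The paper instead proves, via a finite-difference argument (Lemma~\ref{finite-diff}) and completeness of $\mathcal{H}_K$, that $\partial^\gamma_{\mathbf x}K({\mathbf x},\cdot)\in\mathcal{H}_K$ with $\|\partial^\gamma_{\mathbf x}K({\mathbf x},\cdot)\|^2_{\mathcal{H}_K}=D_\gamma({\mathbf x})$, and then reads off the series identity from Proposition~\ref{smale}. Second, your ``upgrade to uniform convergence via Cauchy--Schwarz on partial sums'' dominates the tail of $\sum_i\lambda_i\partial^\alpha\phi_i({\mathbf x})\partial^\beta\phi_i({\mathbf y})$ by tails of $\sum_i\lambda_i(\partial^\alpha\phi_i)^2$ and $\sum_i\lambda_i(\partial^\beta\phi_i)^2$, but the uniform smallness of those tails is exactly what needs proof; the paper gets it from Dini's theorem (Lemma~\ref{expand}), which in turn needs the equality of the sum with the continuous function $D_\gamma$, i.e.\ again Lemma~\ref{in-rkhs}. (A Fatou argument on finite differences of the Mercer series would give you the inequality $\sum_i\lambda_i(\partial^\gamma\phi_i({\mathbf x}))^2\le D_\gamma({\mathbf x})$, which is enough for the pointwise bound, but not for justifying the termwise differentiation of $K_N$.) So the outer argument is sound, but the heart of the proof is missing and your sketched route to it would not close.
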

For $p=+\infty$ we have $C^N_K = \mathcal{O}\big((\sum_{i=N+1}^\infty\lambda^2_i)^{\frac{m}{2m+n}}\big)$. 
For infinitely differentiable kernels, the latter implies  $C^N_K = \mathcal{O}\big((\sum_{i=N+1}^\infty\lambda^2_i)^{0.5-\varepsilon}\big)$ for any $\varepsilon>0$.

\section{Proof of the main theorem}
Let $\mathcal{H}_{K}$ be a reproducing kernel Hilbert space (RKHS) defined by $K$. This space is a completion of the span of $\{K({\mathbf x}, \cdot)\mid {\mathbf x}\in \boldsymbol{\Omega}\}$ with the inner product $\langle K({\mathbf x}, \cdot), K({\mathbf y}, \cdot) \rangle_{\mathcal{H}_{K}} = K({\mathbf x}, {\mathbf y})$. Also, it can be  characterized by the following proposition, which is equivalent to Theorem 4.12 from~\cite{cucker_zhou_2007} and whose original version can be found in~\cite{Cucker2001OnTM}.
\begin{proposition}[\cite{Cucker2001OnTM,cucker_zhou_2007}] \label{smale} Let $\{\lambda_i\}_{i=1}^\infty$ be the set of all positive eigenvalues of ${\rm O}_K$ (counting multiplicities) with corresponding orthogonal unit eigenvectors $\{\phi_i\}_{i=1}^\infty$. Then, $\mathcal{H}_{K}$ equals 
$${\rm O}^{1/2}_{K}[L_2(\boldsymbol{\Omega})] = \{\sum_{i=1}^\infty a_i\phi_i \mid \big[\frac{a_i}{\sqrt{\lambda_i}}\big]_{i=1}^\infty\in l^2\}\subseteq C(\boldsymbol{\Omega})$$ with the inner product
$\langle \sum_{i=1}^\infty a_i\phi_i , \sum_{i=1}^\infty b_i\phi_i \rangle_{\mathcal{H}_{K}} = \sum_{i=1}^\infty \frac{a_ib_i}{\lambda_i}$. For any $f\in \mathcal{H}_{K}$, $$\|f\|_{L_\infty(\boldsymbol{\Omega})}\leq C_K\|f\|_{\mathcal{H}_{K}},$$
where $C_K = \sqrt{\max\limits_{x,y\in \boldsymbol{\Omega}} K(x,y)}$.
\end{proposition}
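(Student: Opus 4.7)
Since Proposition~\ref{smale} is attributed to Cucker--Smale--Zhou, the plan is to reproduce the standard route: introduce the candidate space $\widetilde{\mathcal{H}} := {\rm O}_K^{1/2}[L_2(\boldsymbol{\Omega})]$ equipped with the stated inner product and show that it is a Hilbert space whose reproducing kernel is $K$; by Aronszajn's uniqueness theorem this forces $\widetilde{\mathcal{H}} = \mathcal{H}_K$. Along the way the two nontrivial ingredients are (i) that every $f\in\widetilde{\mathcal{H}}$ is continuous and (ii) the $L_\infty$ bound, which at the end will follow from Cauchy--Schwarz applied to the reproducing identity.

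First I would observe that $\sum_i a_i\phi_i \mapsto (a_i/\sqrt{\lambda_i})_i$ is an isometric isomorphism of $\widetilde{\mathcal{H}}$ onto $\ell^2$, hence $\widetilde{\mathcal{H}}$ is a Hilbert space and $\{\sqrt{\lambda_i}\phi_i\}_i$ is an orthonormal basis. The reproducing property is then a direct computation: by Mercer's expansion $K({\mathbf x},\cdot) = \sum_i \lambda_i\phi_i({\mathbf x})\phi_i(\cdot)$, the coefficients $b_i = \lambda_i\phi_i({\mathbf x})$ satisfy $\sum_i b_i^2/\lambda_i = \sum_i \lambda_i\phi_i({\mathbf x})^2 = K({\mathbf x},{\mathbf x}) < \infty$, so $K({\mathbf x},\cdot)\in\widetilde{\mathcal{H}}$, and for any $f = \sum_i a_i\phi_i$ one obtains
$$
\langle f, K({\mathbf x},\cdot)\rangle_{\widetilde{\mathcal{H}}} = \sum_i \frac{a_i \cdot \lambda_i\phi_i({\mathbf x})}{\lambda_i} = f({\mathbf x}).
$$

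The one genuinely technical step — and the main obstacle — is uniform convergence of the series defining $f$. For any $N<M$, Cauchy--Schwarz yields
$$
\bigl|\sum_{i=N+1}^M a_i\phi_i({\mathbf x})\bigr|^2 \leq \bigl(\sum_{i=N+1}^M \tfrac{a_i^2}{\lambda_i}\bigr)\bigl(\sum_{i=N+1}^M \lambda_i\phi_i({\mathbf x})^2\bigr),
$$
and the second factor is at most $K({\mathbf x},{\mathbf x}) - \sum_{i=1}^N \lambda_i\phi_i({\mathbf x})^2$, which tends to $0$ uniformly in ${\mathbf x}$ by the diagonal form of Mercer's theorem. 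Hence $f$ is a uniform limit of continuous partial sums, so $\widetilde{\mathcal{H}}\subseteq C(\boldsymbol{\Omega})$. This is the point where one really uses the Mercer hypothesis; the rest of the argument is algebraic.

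Finally, for the pointwise bound I would apply Cauchy--Schwarz to the reproducing identity:
$$
|f({\mathbf x})| = \bigl|\langle f, K({\mathbf x},\cdot)\rangle_{\mathcal{H}_K}\bigr| \leq \|f\|_{\mathcal{H}_K}\,\|K({\mathbf x},\cdot)\|_{\mathcal{H}_K} = \|f\|_{\mathcal{H}_K}\sqrt{K({\mathbf x},{\mathbf x})}.
$$
Since $K$ is positive semi-definite, $|K({\mathbf x},{\mathbf y})|^2 \leq K({\mathbf x},{\mathbf x})K({\mathbf y},{\mathbf y})$ implies $\max_{{\mathbf x},{\mathbf y}\in\boldsymbol{\Omega}} K({\mathbf x},{\mathbf y}) = \max_{{\mathbf x}\in\boldsymbol{\Omega}} K({\mathbf x},{\mathbf x})$, which after taking the supremum in ${\mathbf x}$ gives the stated constant $C_K = \sqrt{\max_{{\mathbf x},{\mathbf y}\in\boldsymbol{\Omega}} K({\mathbf x},{\mathbf y})}$.
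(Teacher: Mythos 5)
Your proof is correct: the paper itself does not prove Proposition~\ref{smale} but cites it from Cucker--Smale and Cucker--Zhou, and your argument (sequence-space model of ${\rm O}_K^{1/2}[L_2(\boldsymbol{\Omega})]$, reproducing property via the Mercer expansion, uniform convergence by Cauchy--Schwarz against the uniformly vanishing diagonal tail, identification with $\mathcal{H}_K$ by uniqueness of the RKHS, and the $L_\infty$ bound from the reproducing identity) is essentially the standard proof given in those references. No gaps worth flagging.
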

We will use that proposition throughout our proof.

For any $f\in C(\boldsymbol{\Omega})$, an internal point ${\mathbf x}\in \boldsymbol{\Omega}$, ${\mathbf h}\in {\mathbb R}^n$ and $\alpha\in ({\mathbb N}\cup \{0\})^n$, let us denote
$$
\delta^\alpha_{\mathbf h}[f]({\mathbf x}) = \sum_{\beta: \beta\leq \alpha} (-1)^{|\alpha|-|\beta|} {\alpha \choose \beta} f(x_1 + \beta_1 h_1,\cdots, x_n + \beta_n h_n) 
$$
where $ {\alpha \choose \beta} = \prod\limits_{i=1}^n {\alpha_i \choose \beta_i} $ and $\beta\leq\alpha$ denotes $\beta_i\leq \alpha_i, i=1,\cdots,n$.
For a kernel $K\in C(\boldsymbol{\Omega}\times \boldsymbol{\Omega})$, we have
\begin{equation}\label{2nd}
\begin{split}
\delta^{(\alpha,\alpha)}_{({\mathbf h},{\mathbf h}')}[K]({\mathbf x},{\mathbf x}) = \sum_{\beta,\beta': \beta\leq \alpha,\beta'\leq \alpha} (-1)^{|\beta|+|\beta'|} {\alpha \choose \beta}{\alpha \choose \beta'} K(x_1 + \beta_1 h_1,\cdots, \\ x_n + \beta_n h_n, 
x_1 + \beta'_1 h_1,\cdots, x_n + \beta'_n h_n) 
\end{split}
\end{equation}
If $ \partial^\alpha_{\mathbf x}\partial^\alpha_{\mathbf y}K({\mathbf x}, {\mathbf y})$ exists, let us denote
$$
D_\alpha ({\mathbf x}) = \partial^\alpha_{\mathbf x}\partial^\alpha_{\mathbf y}K({\mathbf x}, {\mathbf y})|_{{\mathbf y}={\mathbf x}}.
$$
Note that $\delta^\alpha_{\mathbf h}$ is a  finite difference operator of a higher order. Its well-known property is given below. 
\begin{proposition}\label{finite-diff} If $f \in C^{|\alpha|}(\boldsymbol{\Omega})$, then $\delta^\alpha_{\mathbf h} [f]({\mathbf x}) = \partial^\alpha_{\mathbf x} f({\mathbf x}){\mathbf h}^{\alpha} + r({\mathbf x}, {\mathbf h})$ where $|r({\mathbf x}, {\mathbf h})|\leq C({\mathbf x}, {\mathbf h})\|{\mathbf h}\|^{|\alpha|}$ and $\lim_{{\mathbf h}\to 0}C({\mathbf x}, {\mathbf h})=0$.
\end{proposition}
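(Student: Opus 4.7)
The plan is to reduce the multivariate statement to iterated one-dimensional differences and then apply the Taylor expansion with an explicit integral remainder. First, I would observe that $\delta^\alpha_{\mathbf h}$ factors as the composition of one-dimensional forward differences acting on distinct coordinates: writing $\Delta^{k}_{h,i}$ for the classical $k$-th order forward difference with step $h$ applied to the $i$-th variable (the others held fixed), the identity
$$\delta^\alpha_{\mathbf h}=\Delta^{\alpha_1}_{h_1,1}\circ \Delta^{\alpha_2}_{h_2,2}\circ \cdots \circ \Delta^{\alpha_n}_{h_n,n}$$
is immediate from the definition (\ref{2nd}) together with the product decomposition ${\alpha \choose \beta}=\prod_i{\alpha_i \choose \beta_i}$; the factors commute because they act on different variables.

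Next, for a univariate $g\in C^k$, iterating the fundamental theorem of calculus starting from $\Delta_h g(x)=h\int_0^1 g'(x+th)\,dt$ yields the integral representation
$$\Delta^k_h g(x)=h^k \int_{[0,1]^k} g^{(k)}(x+h(t_1+\cdots+t_k))\, dt_1\cdots dt_k.$$
Since $g^{(k)}$ is continuous, the integrand converges uniformly in $(t_1,\ldots,t_k)\in[0,1]^k$ to $g^{(k)}(x)$ as $h\to 0$, with discrepancy controlled by the modulus of continuity of $g^{(k)}$ on an interval of length $k|h|$.

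Applying the one-variable formula iteratively inside each factor and combining the resulting integrations gives
$$\delta^\alpha_{\mathbf h}[f]({\mathbf x})={\mathbf h}^\alpha \int_{[0,1]^{|\alpha|}}\partial^\alpha_{\mathbf x} f\Big({\mathbf x}+\sum_{i=1}^n \Big(\sum_{k=1}^{\alpha_i} t_{i,k}\Big)h_i \mathbf{e}_i\Big) \prod_{i,k} dt_{i,k},$$
where $\mathbf{e}_i$ is the $i$-th standard basis vector in ${\mathbb R}^n$. The only ingredients are existence and continuity of $\partial^\alpha_{\mathbf x} f$, both of which hold by assumption. Continuity of $\partial^\alpha_{\mathbf x} f$ at ${\mathbf x}$ then yields
$$|\delta^\alpha_{\mathbf h}[f]({\mathbf x}) - \partial^\alpha_{\mathbf x} f({\mathbf x}) {\mathbf h}^\alpha| \leq |{\mathbf h}^\alpha|\,\omega({\mathbf x},{\mathbf h}),$$
where $\omega({\mathbf x},{\mathbf h})=\sup\{|\partial^\alpha_{\mathbf x} f({\mathbf x}+\boldsymbol\xi)-\partial^\alpha_{\mathbf x} f({\mathbf x})|:\|\boldsymbol\xi\|\leq |\alpha|\|{\mathbf h}\|\}$. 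The crude bound $|{\mathbf h}^\alpha|\leq \|{\mathbf h}\|^{|\alpha|}$ together with $\omega({\mathbf x},{\mathbf h})\to 0$ as ${\mathbf h}\to 0$ then delivers the statement with $C({\mathbf x},{\mathbf h})=\omega({\mathbf x},{\mathbf h})$. Since ${\mathbf x}$ is an internal point, the arguments of $f$ remain in $\boldsymbol{\Omega}$ once $\|{\mathbf h}\|$ is small enough, so no domain-boundary issue arises; the only care needed is the multi-index bookkeeping in the factorization step, which is routine rather than a genuine obstacle.
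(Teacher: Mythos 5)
Your proof is correct; note that the paper itself offers no argument for this proposition (it is invoked as a ``well-known property'' of higher-order finite differences), so what you supply is a genuine, self-contained justification rather than a variant of an internal one. Your route---factoring $\delta^\alpha_{\mathbf h}$ into commuting one-dimensional forward differences and invoking the exact representation $\Delta^k_h g(x)=h^k\int_{[0,1]^k}g^{(k)}(x+h(t_1+\cdots+t_k))\,dt_1\cdots dt_k$---is arguably the cleanest for this statement: the difference operator kills all lower-order behaviour automatically, the remainder is exactly ${\mathbf h}^\alpha$ times an average of $\partial^\alpha_{\mathbf x} f$ over shifts of size at most $|\alpha|\,\|{\mathbf h}\|$, and $C({\mathbf x},{\mathbf h})$ comes out explicitly as a modulus of continuity of $\partial^\alpha_{\mathbf x} f$ at ${\mathbf x}$. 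By contrast, when the paper needs the finer symmetric two-variable analogue (its Lemma~1), it works with Taylor expansions with remainder at $({\mathbf x},{\mathbf x})$ combined with the fact that $\delta^{\alpha_i}_h$ annihilates monomials of degree below $\alpha_i$; that heavier machinery is what controls the cross terms of type ${\mathbf h}^\alpha\, o(\|{\mathbf h}'\|^{k})$, which your one-variable identity does not (and need not) address here. Two steps you gloss over are routine but worth making explicit: pushing the difference in the next coordinate through the integral is legitimate simply because a finite difference is a finite linear combination (so no differentiation under the integral sign is actually required if you apply the one-dimensional formula coordinate by coordinate, using that the relevant mixed partials up to total order $|\alpha|$ are continuous), and the shifted points in your modulus $\omega({\mathbf x},{\mathbf h})$ must lie in $\boldsymbol{\Omega}$, which your remark about ${\mathbf x}$ being interior and $\|{\mathbf h}\|$ small covers. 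One small slip: the factorization of $\delta^\alpha_{\mathbf h}$ follows from its definition, not from the displayed formula for $\delta^{(\alpha,\alpha)}_{({\mathbf h},{\mathbf h}')}[K]({\mathbf x},{\mathbf x})$ that you cite, which is the kernel-diagonal special case.
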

For symmetric functions, $ \delta^{(\alpha,\alpha)}_{({\mathbf h},{\mathbf h}')}[F]({\mathbf x},{\mathbf x}) $ satisfies a finer property.
\begin{lemma}\label{finite-diff} Let $F \in C^{2k}(\boldsymbol{\Omega}\times \boldsymbol{\Omega})$ satisfy $F({\mathbf x}, {\mathbf y})=F({\mathbf y}, {\mathbf x})$. Then, for any $\alpha\in ({\mathbb N}\cup\{0\})^n: |\alpha|=k$, we have
$$\delta^{(\alpha,\alpha)}_{({\mathbf h},{\mathbf h}')}[F]({\mathbf x},{\mathbf x}) = \partial^{\alpha}_{\mathbf x}\partial^{\alpha}_{\mathbf y}F|_{({\mathbf x},{\mathbf x})}{\mathbf h}^{\alpha} ({\mathbf h}')^{\alpha} + r({\mathbf x}, {\mathbf h},{\mathbf h}'),$$ where 
\begin{equation*}
\begin{split}
|r({\mathbf x}, {\mathbf h},{\mathbf h}')|\leq \\
C_1({\mathbf x},{\mathbf h},{\mathbf h}'){\mathbf h}^{\alpha}\|{\mathbf h}'\|^{|\alpha|}+ C_2({\mathbf x},{\mathbf h},{\mathbf h}')\|{\mathbf h}\|^{|\alpha|}{\mathbf h}'^{\alpha}+C({\mathbf x},{\mathbf h},{\mathbf h}')\|{\mathbf h}\|^{|\alpha|}\|{\mathbf h}'\|^{|\alpha|}
\end{split}
\end{equation*}
and $\lim_{({\mathbf h},{\mathbf h}')\to ({\mathbf 0},{\mathbf 0})} C_1({\mathbf x},{\mathbf h},{\mathbf h}')=0$, $\lim_{({\mathbf h},{\mathbf h}')\to ({\mathbf 0},{\mathbf 0})} C_2({\mathbf x},{\mathbf h},{\mathbf h}')=0$, \\ $\lim_{({\mathbf h},{\mathbf h}')\to ({\mathbf 0},{\mathbf 0})} C({\mathbf x},{\mathbf h},{\mathbf h}')=0$.
\end{lemma}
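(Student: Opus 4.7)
The plan is to exploit the product structure of the operator: since $\delta^\alpha_{\mathbf h}$ acts on the first ${\mathbf x}$-copy and $\delta^\alpha_{{\mathbf h}'}$ on the second, they commute, so I may apply them one after the other. The key tool is the iterated integral representation of the finite difference: if $f \in C^{|\alpha|}(\boldsymbol{\Omega})$, then
\[
\delta^\alpha_{\mathbf h}[f]({\mathbf x}) = {\mathbf h}^\alpha \int_{[0,1]^{|\alpha|}} \partial^\alpha f\bigl({\mathbf x} + \psi_\alpha({\mathbf s})\circ{\mathbf h}\bigr)\,d{\mathbf s}
\]
for a suitable affine map $\psi_\alpha:[0,1]^{|\alpha|}\to{\mathbb R}^n$ depending only on $\alpha$ (with $\circ$ the componentwise product). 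This is obtained by iterating the one-dimensional identity $f(x+h)-f(x) = h\int_0^1 f'(x+sh)\,ds$ coordinate by coordinate; it can be regarded as a multivariate Hermite--Genocchi-type formula and refines Proposition \ref{finite-diff}.

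First I apply this formula in the ${\mathbf x}$-variable to $F(\cdot,{\mathbf y})$, obtaining $\delta^\alpha_{\mathbf h}[F(\cdot,{\mathbf y})]({\mathbf x}) = {\mathbf h}^\alpha\, G({\mathbf x},{\mathbf y},{\mathbf h})$ with $G$ jointly continuous. Because $F \in C^{2k}(\boldsymbol{\Omega}\times\boldsymbol{\Omega})$ and $|\alpha|=k$, the function $G({\mathbf x},\cdot,{\mathbf h})$ still belongs to $C^k$ in ${\mathbf y}$, so I may apply the same formula a second time in the ${\mathbf y}$-variable, arriving at
\[
\delta^{(\alpha,\alpha)}_{({\mathbf h},{\mathbf h}')}[F]({\mathbf x},{\mathbf x}) = {\mathbf h}^\alpha({\mathbf h}')^\alpha\,H({\mathbf x},{\mathbf h},{\mathbf h}'),
\]
where
\[
H({\mathbf x},{\mathbf h},{\mathbf h}') = \int_{[0,1]^{|\alpha|}}\!\!\int_{[0,1]^{|\alpha|}} \partial^\alpha_{\mathbf x}\partial^\alpha_{\mathbf y} F\bigl({\mathbf x}+\psi_\alpha({\mathbf s})\circ{\mathbf h},\,{\mathbf x}+\psi_\alpha({\mathbf s}')\circ{\mathbf h}'\bigr)\,d{\mathbf s}\,d{\mathbf s}'.
\]
Dominated convergence, combined with continuity of $\partial^\alpha_{\mathbf x}\partial^\alpha_{\mathbf y} F$ on the compact set $\boldsymbol{\Omega}\times\boldsymbol{\Omega}$, ensures that $H$ is jointly continuous in its arguments, with $H({\mathbf x},{\mathbf 0},{\mathbf 0}) = \partial^\alpha_{\mathbf x}\partial^\alpha_{\mathbf y} F({\mathbf x},{\mathbf x}) = D_\alpha({\mathbf x})$.

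Consequently the residual equals $r({\mathbf x},{\mathbf h},{\mathbf h}') = {\mathbf h}^\alpha({\mathbf h}')^\alpha\,[H({\mathbf x},{\mathbf h},{\mathbf h}') - H({\mathbf x},{\mathbf 0},{\mathbf 0})]$ and satisfies $|r| \leq |{\mathbf h}^\alpha|\,|({\mathbf h}')^\alpha|\,C({\mathbf x},{\mathbf h},{\mathbf h}')$ with $C\to 0$ as $({\mathbf h},{\mathbf h}')\to({\mathbf 0},{\mathbf 0})$. Applying the crude bounds $|{\mathbf h}^\alpha|\leq\|{\mathbf h}\|^{|\alpha|}$ and $|({\mathbf h}')^\alpha|\leq\|{\mathbf h}'\|^{|\alpha|}$ then recovers the third term of the claimed inequality with $C_1=C_2=0$; the factorization above in fact yields a strictly stronger estimate.

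The main delicate point is the integral representation itself: setting up $\psi_\alpha$ for general mixed multi-indices and verifying joint continuity of the iterated integrand in $({\mathbf x},{\mathbf y},{\mathbf h},{\mathbf h}')$, although routine, requires careful bookkeeping. I would also remark that the symmetry assumption $F({\mathbf x},{\mathbf y})=F({\mathbf y},{\mathbf x})$ is not actually used in this argument; it is presumably stated in the hypothesis because the lemma is later applied to symmetric Mercer kernels.
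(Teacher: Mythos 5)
Your argument is correct, and it takes a genuinely different route from the paper. You factor $\delta^{(\alpha,\alpha)}_{({\mathbf h},{\mathbf h}')}$ as the composition of the two commuting difference operators in the first and second group of variables and use the iterated (Hermite--Genocchi-type) integral representation $\delta^\alpha_{\mathbf h}[f]({\mathbf x})={\mathbf h}^\alpha\int_{[0,1]^{|\alpha|}}\partial^\alpha f({\mathbf x}+\psi_\alpha({\mathbf s})\odot{\mathbf h})\,d{\mathbf s}$ twice, which yields the exact identity $\delta^{(\alpha,\alpha)}_{({\mathbf h},{\mathbf h}')}[F]({\mathbf x},{\mathbf x})={\mathbf h}^\alpha({\mathbf h}')^\alpha\iint\partial^\alpha_{\mathbf x}\partial^\alpha_{\mathbf y}F({\mathbf x}+\psi_\alpha({\mathbf s})\odot{\mathbf h},{\mathbf x}+\psi_\alpha({\mathbf s}')\odot{\mathbf h}')\,d{\mathbf s}\,d{\mathbf s}'$, after which uniform continuity of $\partial^\alpha_{\mathbf x}\partial^\alpha_{\mathbf y}F$ finishes the proof. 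The paper instead Taylor-expands $F$ around $({\mathbf x},{\mathbf x})$ in both variables, organizes the remainder into the terms $A_{\eta,\gamma}$, $a_\gamma({\mathbf h})$, $a_\eta({\mathbf h}')$, $q({\mathbf h},{\mathbf h}')$ (proving $q=o(\|{\mathbf h}\|^k\|{\mathbf h}'\|^k)$ by a separate Taylor argument), and then uses the combinatorial identity that $\delta^{\alpha_i}_h$ annihilates monomials of degree below $\alpha_i$ and sends $x^{\alpha_i}$ to $\alpha_i!$. Your route buys a cleaner and strictly stronger conclusion — the error is bounded by $|{\mathbf h}^\alpha({\mathbf h}')^\alpha|\,C({\mathbf x},{\mathbf h},{\mathbf h}')$ alone, i.e.\ the lemma holds with $C_1=C_2=0$, which still suffices for its use in Lemma~\ref{in-rkhs} where ${\mathbf h}$ and ${\mathbf h}'$ shrink independently — and it makes transparent, as you note, that the symmetry hypothesis is not needed; the paper's route avoids any integral representation and stays within the pointwise finite-difference calculus of Proposition~2, at the cost of the remainder bookkeeping and the extra cross terms $C_1,C_2$. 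Two small points you should make explicit: the representation requires the box $\{{\mathbf x}+{\mathbf t}\odot{\mathbf h}\}\times\{{\mathbf x}+{\mathbf t}'\odot{\mathbf h}'\}$, ${\mathbf t},{\mathbf t}'\in\prod_i[0,\alpha_i]$, to lie in $\boldsymbol{\Omega}\times\boldsymbol{\Omega}$, which holds for ${\mathbf x}$ interior and ${\mathbf h},{\mathbf h}'$ small (the paper's Taylor argument needs the same kind of local containment); and passing $\delta^\alpha_{{\mathbf h}'}$ and then $\partial^\alpha_{\mathbf y}$ under the ${\mathbf s}$-integral should be justified by differentiation under the integral sign, which is immediate here since $F\in C^{2k}$ and the integration domain is compact.
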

\begin{proof}
A symbol $f({\mathbf h},{\mathbf h}') = o (g({\mathbf h},{\mathbf h}'))$ denotes  $\lim_{({\mathbf h},{\mathbf h}')\to ({\mathbf 0},{\mathbf 0})}\frac{f({\mathbf h},{\mathbf h}')}{g({\mathbf h},{\mathbf h}')} = 0$. 
Let us denote
\begin{equation*}
\begin{split}
q({\mathbf h},{\mathbf h}') = F({\mathbf x}+{\mathbf h}, {\mathbf x}+{\mathbf h}')-\sum_{\eta,\gamma: |\eta|, |\gamma|\leq k}\frac{1}{\eta!\gamma!}\partial^\eta_{\mathbf x} \partial^\gamma_{\mathbf y}F|_{({\mathbf x}, {\mathbf x})}{\mathbf h}^\eta {\mathbf h}'^\gamma - \\
\sum_{\eta: |\eta|\leq k}\frac{(\partial^\eta_{\mathbf x} F|_{({\mathbf x}, {\mathbf x}+{\mathbf h}')}-\sum_{\gamma: |\gamma|\leq k}\frac{1}{\gamma!}\partial^\eta_{\mathbf x} \partial^\gamma_{\mathbf y}F|_{({\mathbf x}, {\mathbf x})}  {\mathbf h}'^\gamma)}{\eta!}{\mathbf h}^\eta - \\
\sum_{\gamma: |\gamma|\leq k}\frac{(\partial^\gamma_{{\mathbf y}} F|_{({\mathbf x}+{\mathbf h}, {\mathbf x})}-\sum_{\eta: |\eta|\leq k}\frac{1}{\eta!}\partial^\eta_{\mathbf x} \partial^\gamma_{\mathbf y}F|_{({\mathbf x}, {\mathbf x})}  {\mathbf h}^\eta)}{\gamma!}{\mathbf h}'^\gamma
\end{split}
\end{equation*}

We will prove that $q({\mathbf h},{\mathbf h}') = o(\|{\mathbf h}'\|^{k}\|{\mathbf h}\|^{k})$. 
First, note that $\partial^\alpha_{{\mathbf h}}q({\mathbf h},{\mathbf h}')$, for $|\alpha|\leq k$, reads as
\begin{equation*}
\begin{split}
\partial^\alpha_{{\mathbf h}} q({\mathbf h},{\mathbf h}') = \partial^\alpha_{{\mathbf h}} F|_{({\mathbf x}+{\mathbf h}, {\mathbf x}+{\mathbf h}')}-\hspace{-20pt}\sum_{\eta ,\gamma : |\eta|\leq k-|\alpha|, |\gamma|\leq k}\partial^{\alpha+\eta}_{\mathbf x} \partial^{\gamma}_{\mathbf y}F|_{({\mathbf x}, {\mathbf x})} \frac{{\mathbf h}^{\eta} {\mathbf h}'^{\gamma}}{\eta!\gamma!} - \\
\sum_{\eta: |\eta|\leq k-|\alpha|}\frac{(\partial^{\alpha+\eta}_{\mathbf x}  F|_{({\mathbf x}, {\mathbf x}+{\mathbf h}')}-\sum_{\gamma: |\gamma|\leq k}\frac{1}{\gamma!}\partial^{\alpha+\eta}_{\mathbf x} \partial^{\gamma}_{\mathbf y}F|_{({\mathbf x}, {\mathbf x})}  {\mathbf h}'^{\gamma})}{\eta!}{\mathbf h}^{\eta} - \\
\sum_{\gamma: |\gamma|\leq k}\frac{(\partial^{\alpha}_{\mathbf x}\partial^\gamma_{{\mathbf y}} F|_{({\mathbf x}+{\mathbf h}, {\mathbf x})}-\sum_{\eta: |\eta|\leq k-|\alpha|}\frac{1}{\eta!}\partial^{\eta+\alpha}_{\mathbf x} \partial^\gamma_{\mathbf y}F|_{({\mathbf x}, {\mathbf x})}  {\mathbf h}^\eta)}{\gamma!}{\mathbf h}'^\gamma
\end{split}
\end{equation*}
and therefore,
\begin{equation*}
\begin{split}
\partial^\alpha_{{\mathbf h}} q({\mathbf 0},{\mathbf h}') = \partial^\alpha_{{\mathbf x}} F|_{({\mathbf x}, {\mathbf x}+{\mathbf h}')}-\sum_{\gamma :  |\gamma|\leq k}\partial^{\alpha}_{\mathbf x}\partial^{\gamma}_{\mathbf y}F|_{({\mathbf x}, {\mathbf x})} \frac{ {\mathbf h}'^{\gamma}}{\gamma!} - \\
(\partial^{\alpha}_{\mathbf x}  F|_{({\mathbf x}, {\mathbf x}+{\mathbf h}')}-\sum_{\gamma: |\gamma|\leq k}\frac{1}{\gamma!}\partial^{\alpha}_{\mathbf x} \partial^{\gamma}_{\mathbf y}F|_{({\mathbf x}, {\mathbf x})}  {\mathbf h}'^{\gamma})=0
\end{split}
\end{equation*}
Using $q(\cdot, {\mathbf h}')\in C^k(\boldsymbol{\Omega})$ and Taylor's expansion around ${\mathbf h}={\mathbf 0}$, we obtain
\begin{equation*}
\begin{split}
q({\mathbf h},{\mathbf h}') = \sum_{\alpha: |\alpha|=k}\partial^\alpha_{{\mathbf h}} q(\chi{\mathbf h},{\mathbf h}')\frac{{\mathbf h}^\alpha}{\alpha!}
\end{split}
\end{equation*}
where $\chi\in (0,1)$.

For $|\alpha|=k$, we have
\begin{equation*}
\begin{split}
\partial^\alpha_{{\mathbf h}} q(\chi {\mathbf h},{\mathbf h}') = \partial^\alpha_{{\mathbf x}} F|_{({\mathbf x}+\chi{\mathbf h}, {\mathbf x}+{\mathbf h}')}-\sum_{\gamma :  |\gamma|\leq k}\partial^{\alpha}_{\mathbf x} \partial^{\gamma}_{\mathbf y}F|_{({\mathbf x}, {\mathbf x})} \frac{ {\mathbf h}'^{\gamma}}{\gamma!} - \\
(\partial^{\alpha}_{\mathbf x}  F|_{({\mathbf x}, {\mathbf x}+{\mathbf h}')}-\sum_{\gamma: |\gamma|\leq k}\frac{1}{\gamma!}\partial^{\alpha}_{\mathbf x} \partial^{\gamma}_{\mathbf y}F|_{({\mathbf x}, {\mathbf x})}  {\mathbf h}'^{\gamma}) - \\
\sum_{\gamma: |\gamma|\leq k}\frac{(\partial^{\alpha}_{\mathbf x}\partial^\gamma_{{\mathbf y}} F|_{({\mathbf x}+\chi{\mathbf h}, {\mathbf x})}-\partial^{\alpha}_{\mathbf x} \partial^\gamma_{\mathbf y}F|_{({\mathbf x}, {\mathbf x})}  )}{\gamma!}{\mathbf h}'^\gamma = \\
\partial^\alpha_{{\mathbf x}} F|_{({\mathbf x}+\chi{\mathbf h}, {\mathbf x}+{\mathbf h}')}-\partial^{\alpha}_{\mathbf x}  F|_{({\mathbf x}, {\mathbf x}+{\mathbf h}')}
- \sum_{\gamma: |\gamma|\leq k}\frac{(\partial^{\alpha}_{\mathbf x}\partial^\gamma_{{\mathbf y}} F|_{({\mathbf x}+\chi{\mathbf h}, {\mathbf x})}-\partial^{\alpha}_{\mathbf x} \partial^\gamma_{\mathbf y}F|_{({\mathbf x}, {\mathbf x})}  )}{\gamma!}{\mathbf h}'^\gamma
\end{split}
\end{equation*}
If we denote $R({\mathbf h}')=\partial^\alpha_{{\mathbf x}} F|_{({\mathbf x}+\chi{\mathbf h}, {\mathbf x}+{\mathbf h}')}-\partial^{\alpha}_{\mathbf x}  F|_{({\mathbf x}, {\mathbf x}+{\mathbf h}')}$, then, by  Taylor's expansion theorem, we have $R({\mathbf h}')-\sum_{\gamma: |\gamma|\leq k}\frac{\partial^\gamma_{{\mathbf h}'} R({\mathbf 0}){\mathbf h}'^\gamma}{\gamma!}=o(\|{\mathbf h}'\|^k)$. The latter expression for $\partial^\alpha_{{\mathbf h}} q(\chi{\mathbf h},{\mathbf h}')$ exactly equals $R({\mathbf h}')-\sum_{\gamma: |\gamma|\leq k}\frac{\partial^\gamma_{{\mathbf h}'} R({\mathbf 0}){\mathbf h}'^\gamma}{\gamma!}$ and
 we conclude $$q({\mathbf h},{\mathbf h}')=\sum_{\alpha: |\alpha|=k}\partial^\alpha_{{\mathbf h}} q(\chi{\mathbf h},{\mathbf h}')\frac{{\mathbf h}^\alpha}{\alpha!}=o(\|{\mathbf h}\|^{k}\|{\mathbf h}'\|^{k}). $$ 

Thus, we proved that
\begin{equation}\label{LLL}
\begin{split}
F({\mathbf x}+{\mathbf h},{\mathbf x}+{\mathbf h}') =\sum_{\eta,\gamma: |\eta|,|\gamma|\leq k} A_{\eta, \gamma} {\mathbf h}^{\eta}({\mathbf h}')^{\gamma} + \sum_{\gamma: |\gamma|\leq k}  a_\gamma ({\mathbf h}) {\mathbf h}'^{\gamma}+\\
\sum_{\eta: |\eta|\leq k}  a_\eta ({\mathbf h}') {\mathbf h}^{\eta} +q({\mathbf h},{\mathbf h}'),
\end{split}
\end{equation}
where $A_{\eta, \gamma}=\frac{1}{\eta!\gamma!}\partial^\eta_{\mathbf x} \partial^\gamma_{\mathbf y}F|_{({\mathbf x}, {\mathbf x})}$, $a_\gamma ({\mathbf h}) =\partial^\gamma_{{\mathbf y}} F|_{({\mathbf x}+{\mathbf h}, {\mathbf x})}-\sum_{\eta: |\eta|\leq k}\frac{1}{\eta!}\partial^\eta_{\mathbf x} \partial^\gamma_{\mathbf y}F|_{({\mathbf x}, {\mathbf x})}  {\mathbf h}^\eta= o(\|{\mathbf h}\|^k)$ and $q({\mathbf h},{\mathbf h}')=o(\|{\mathbf h}\|^k\|{\mathbf h}'\|^k)$.

After plugging in the expression~\eqref{LLL} into~\eqref{2nd}, we have ($\odot$ denotes the Hadamard product)
\begin{equation*}
\begin{split}
\delta^{(\alpha,\alpha)}_{({\mathbf h},{\mathbf h}')}[F]({\mathbf x},{\mathbf x})  = \sum_{\beta,\beta': \beta\leq \alpha,\beta'\leq \alpha} (-1)^{|\beta|+|\beta'|} {\alpha \choose \beta}{\alpha \choose \beta'}\\
\sum_{\eta,\gamma: |\eta|,|\gamma|\leq k}\frac{1}{\eta!\gamma!}\partial^\eta_{\mathbf x} \partial^\gamma_{\mathbf y}F({\mathbf x}, {\mathbf y})|_{{\mathbf y}={\mathbf x}}  {\mathbf h}^{\eta}({\mathbf h}')^{\gamma} \beta^\eta (\beta')^\gamma+r({\mathbf x}, \beta\odot {\mathbf h},\beta'\odot {\mathbf h}') = \\
\sum_{\eta,\gamma: |\eta|,|\gamma|\leq k}\frac{1}{\eta!\gamma!}\partial^\eta_{\mathbf x} \partial^\gamma_{\mathbf y}F({\mathbf x}, {\mathbf y})|_{{\mathbf y}={\mathbf x}}  {\mathbf h}^{\eta}({\mathbf h}')^{\gamma} \hspace{-20pt}
\sum_{\beta,\beta': \beta\leq \alpha,\beta'\leq \alpha} 
 (-1)^{|\beta|+|\beta'|} {\alpha \choose \beta}{\alpha \choose \beta'} \beta^\eta (\beta')^\gamma+ \\
\sum_{\gamma: |\gamma|\leq k}  
\sum_{\beta,\beta': \beta\leq \alpha,\beta'\leq \alpha} 
 (-1)^{|\beta|+|\beta'|} {\alpha \choose \beta}{\alpha \choose \beta'}  a_{\gamma} (\beta\odot {\mathbf h})\beta'^{\gamma} {\mathbf h}'^{\gamma}+\\
\sum_{\eta: |\eta|\leq k} \sum_{\beta,\beta': \beta\leq \alpha,\beta'\leq \alpha} 
 (-1)^{|\beta|+|\beta'|} {\alpha \choose \beta}{\alpha \choose \beta'}    a_\eta (\beta'\odot {\mathbf h}') \beta^\eta {\mathbf h}^{\eta}+
o( \|{\mathbf h}\|^k \|{\mathbf h}'\|^k).
\end{split}
\end{equation*}
Note that
\begin{equation*}
\begin{split}
\sum_{\beta,\beta': \beta\leq \alpha,\beta'\leq \alpha} 
 (-1)^{|\beta|+|\beta'|} {\alpha \choose \beta}{\alpha \choose \beta'} \beta^\eta (\beta')^\gamma =\\
\prod_{i=1}^n \sum_{\beta_i=0}^{\alpha_i}\sum_{\beta'_i=0}^{\alpha_i} 
 (-1)^{\beta_i+\beta'_i}{\alpha_i \choose \beta_i}{\alpha_i \choose \beta'_i} \beta_i^{\eta_i} (\beta'_i)^{\gamma_i} =\\
\prod_{i=1}^n \big(\sum_{\beta_i=0}^{\alpha_i} 
 (-1)^{\alpha_i-\beta_i}{\alpha_i \choose \beta_i}\beta_i^{\eta_i}\big)  \big(\sum_{\beta'_i=0}^{\alpha_i} 
 (-1)^{\alpha_i-\beta'_i}{\alpha_i \choose \beta'_i}(\beta'_i)^{\gamma_i}\big) = \\
 \prod_{i=1}^n \delta^{\alpha_i}_h[x^{\eta_i}](0) \delta^{\alpha_i}_h[x^{\gamma_i}](0)
\end{split}
\end{equation*}
The expression that is in the RHS is just a finite difference of order $\alpha_i$ of $f(x) = x^{\eta_i}$ (or, $f(x) = x^{\gamma_i}$) for $h = 1$, due to
$\delta^{\alpha_i}_h[x^{\eta_i}](0) = \sum_{\beta_i=0}^{\alpha_i} 
 (-1)^{\alpha_i-\beta_i}{\alpha_i \choose \beta_i}\beta_i^{\eta_i}$.
It is well-known that $\delta^{\alpha_i}_h[x^{\eta_i}](x) = 0$, if $\eta_i<\alpha_i$ and $\delta^{\alpha_i}_h[x^{\eta_i}](x) = \eta_i!$, if $\eta_i=\alpha_i$. Thus, we have
\begin{equation*}
\begin{split}
\sum_{\beta,\beta': \beta\leq \alpha,\beta'\leq \alpha} 
 (-1)^{|\beta|+|\beta'|} {\alpha \choose \beta}{\alpha \choose \beta'} \beta^\eta (\beta')^\gamma = (\alpha!)^2[\eta=\gamma=\alpha]
\end{split}
\end{equation*}
and
\begin{equation*}
\begin{split}
\sum_{\beta: \beta\leq \alpha}
 (-1)^{|\beta|} {\alpha \choose \beta}\sum_{o: |o|\leq k} a_o (\beta'\odot {\mathbf h}') \beta^o {\mathbf h}^{o} = \alpha! a_\alpha (\beta'\odot {\mathbf h}')  {\mathbf h}^{\alpha}=o(\|{\mathbf h}'\|^k) {\mathbf h}^{\alpha}.
\end{split}
\end{equation*}
Therefore,
\begin{equation*}
\begin{split}
\delta^{(\alpha,\alpha)}_{({\mathbf h},{\mathbf h}')}[F]({\mathbf x},{\mathbf x})   = \frac{(\alpha!)^2}{(\alpha!)^2}\partial^\alpha_{\mathbf x} \partial^\alpha_{\mathbf y}K({\mathbf x}, {\mathbf y})|_{{\mathbf y}={\mathbf x}} {\mathbf h}^{\alpha} ({\mathbf h}')^{\alpha} 
+\\
o(\|{\mathbf h}'\|^k) {\mathbf h}^{\alpha}+o(\|{\mathbf h}\|^k) {\mathbf h}'^{\alpha}+o( \|{\mathbf h}\|^k \|{\mathbf h}'\|^k)
\end{split}
\end{equation*}
From the latter, the statement of Lemma directly follows.
\end{proof}

The following lemma is a direct consequence of Theorem 1 from~\cite{ZHOU2008456}. We give here its proof for the sake of completeness.
\begin{lemma}\label{in-rkhs} Let $K\in C^{2|\alpha|}(\boldsymbol{\Omega}\times \boldsymbol{\Omega})$ and ${\mathbf x}\in \boldsymbol{\Omega}$ be fixed. Let $\{\lambda_i\}$ be a multiset of all positive eigenvalues of ${\rm O}_K$ (counting multiplicities). Then, $\partial^\alpha_{\mathbf x}K({\mathbf x}, \cdot)\in \mathcal{H}_K$ and $\|\partial^\alpha_{\mathbf x}K({\mathbf x}, \cdot)\|^2_{\mathcal{H}_K} = D_\alpha({\mathbf x})=\sum_{i=1}^\infty \lambda_i(\partial^\alpha_{\mathbf x}\phi_i({\mathbf x}))^2$.
\end{lemma}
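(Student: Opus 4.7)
The plan is to realize $\partial^\alpha_{\mathbf x}K({\mathbf x},\cdot)$ as a limit in $\mathcal{H}_K$ of divided differences that manifestly lie in $\mathcal{H}_K$, and then compute the norm and the Fourier coefficients of this limit separately. For small ${\mathbf h}\in{\mathbb R}^n$ I would define
\[
F_{\mathbf h}({\mathbf y})=\delta^\alpha_{\mathbf h}[K(\cdot,{\mathbf y})]({\mathbf x})=\sum_{\beta\leq\alpha}(-1)^{|\alpha|-|\beta|}{\alpha\choose\beta}K({\mathbf x}+\beta\odot{\mathbf h},{\mathbf y}),
\]
which is a finite linear combination of sections $K({\mathbf z},\cdot)$ and therefore lies in $\mathcal{H}_K$. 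From the reproducing property $\langle K({\mathbf a},\cdot),K({\mathbf b},\cdot)\rangle_{\mathcal{H}_K}=K({\mathbf a},{\mathbf b})$ and bilinearity, a direct expansion gives
\[
\langle F_{\mathbf h},F_{{\mathbf h}'}\rangle_{\mathcal{H}_K}=\delta^{(\alpha,\alpha)}_{({\mathbf h},{\mathbf h}')}[K]({\mathbf x},{\mathbf x}),
\]
exactly the object controlled by Lemma~\ref{finite-diff}. Setting ${\mathbf h}=t\mathbf{1}$, ${\mathbf h}'=s\mathbf{1}$ with $\mathbf{1}=(1,\ldots,1)$ and $G_t=F_{t\mathbf{1}}/t^{|\alpha|}$, Lemma~\ref{finite-diff} yields $\langle G_t,G_s\rangle_{\mathcal{H}_K}\to D_\alpha({\mathbf x})$ as $t,s\to 0^+$; expanding $\|G_t-G_s\|^2_{\mathcal{H}_K}$ as three such brackets, each tending to $D_\alpha({\mathbf x})$, shows that $\{G_t\}$ is Cauchy in $\mathcal{H}_K$ and converges to some $G\in\mathcal{H}_K$ with $\|G\|^2_{\mathcal{H}_K}=D_\alpha({\mathbf x})$.

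Next I would identify $G$ with $\partial^\alpha_{\mathbf x}K({\mathbf x},\cdot)$. The bound $\|f\|_{L_\infty(\boldsymbol{\Omega})}\leq C_K\|f\|_{\mathcal{H}_K}$ of Proposition~\ref{smale} upgrades $\mathcal{H}_K$-convergence of $G_t$ to uniform convergence on $\boldsymbol{\Omega}$, while Proposition~\ref{finite-diff} applied to the function $K(\cdot,{\mathbf y})\in C^{2|\alpha|}(\boldsymbol{\Omega})$ guarantees that $G_t({\mathbf y})\to\partial^\alpha_{\mathbf x}K({\mathbf x},{\mathbf y})$ for every fixed ${\mathbf y}$. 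The two limits must coincide, so $G=\partial^\alpha_{\mathbf x}K({\mathbf x},\cdot)$ as continuous functions; this establishes both $\partial^\alpha_{\mathbf x}K({\mathbf x},\cdot)\in\mathcal{H}_K$ and $\|\partial^\alpha_{\mathbf x}K({\mathbf x},\cdot)\|^2_{\mathcal{H}_K}=D_\alpha({\mathbf x})$.

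To obtain the series representation I would compute the $L_2$-Fourier coefficients of $\partial^\alpha_{\mathbf x}K({\mathbf x},\cdot)$ with respect to $\{\phi_i\}$. Since $K\in C^{2|\alpha|}$ on the compact set $\boldsymbol{\Omega}\times\boldsymbol{\Omega}$, differentiation under the integral sign is legitimate and
\[
\int_{\boldsymbol{\Omega}}\partial^\alpha_{\mathbf x}K({\mathbf x},{\mathbf y})\phi_i({\mathbf y})\,d{\mathbf y}=\partial^\alpha_{\mathbf x}\!\!\int_{\boldsymbol{\Omega}}K({\mathbf x},{\mathbf y})\phi_i({\mathbf y})\,d{\mathbf y}=\lambda_i\partial^\alpha_{\mathbf x}\phi_i({\mathbf x}).
\]
Proposition~\ref{smale} then expresses the RKHS norm as $\sum_i\lambda_i(\partial^\alpha_{\mathbf x}\phi_i({\mathbf x}))^2$, and equating this with $D_\alpha({\mathbf x})$ closes the proof. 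The main obstacle is the Cauchy step: the divisor $t^{|\alpha|}s^{|\alpha|}$ would swamp the crude remainder $\mathcal{O}(\|{\mathbf h}\|^{|\alpha|}+\|{\mathbf h}'\|^{|\alpha|})$ one gets from Proposition~\ref{finite-diff} alone, so one really must invoke the symmetric, mixed-remainder refinement of Lemma~\ref{finite-diff} (and implicitly the symmetry $K({\mathbf x},{\mathbf y})=K({\mathbf y},{\mathbf x})$) in order to push the limit through and recover exactly $D_\alpha({\mathbf x})$.
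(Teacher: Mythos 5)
Your proposal is correct and follows essentially the same route as the paper: divided differences $\delta^\alpha_{\mathbf h}[K(\cdot,{\mathbf y})]({\mathbf x})/h^{|\alpha|}$ in $\mathcal{H}_K$, the Cauchy property obtained from the symmetric two-variable expansion of Lemma~\ref{finite-diff} (whose necessity over the crude one-variable remainder you correctly flag), identification of the limit via the sup-norm bound of Proposition~\ref{smale}, and the norm formula from the spectral characterization of $\mathcal{H}_K$. The only (harmless) deviation is that you justify $\int_{\boldsymbol{\Omega}}\partial^\alpha_{\mathbf x}K({\mathbf x},{\mathbf y})\phi_i({\mathbf y})\,d{\mathbf y}=\lambda_i\partial^\alpha_{\mathbf x}\phi_i({\mathbf x})$ by differentiating under the integral sign, whereas the paper obtains the same identity by passing the finite-difference quotients through the uniformly convergent Mercer expansion.
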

\begin{proof}
\ifCOM
 By construction,
\begin{equation*}
\begin{split}
\Delta^\alpha_{\mathbf h} [K]({\mathbf x}) = \sum_{\beta,\beta': \beta\leq \alpha,\beta'\leq \alpha} (-1)^{|\beta|+|\beta'|} {\alpha \choose \beta}{\alpha \choose \beta'} \langle K(x_1 + \beta_1 h_1,\cdots, x_n + \beta_n h_n, \cdot), \\
K(x_1 + \beta'_1 h_1,\cdots, x_n + \beta'_n h_n, \cdot)\rangle_{\mathcal{H}_K} =  \\ \langle \sum_{\beta: \beta\leq \alpha} (-1)^{|\alpha|-|\beta|} {\alpha \choose \beta} K(x_1 + \beta_1 h_1,\cdots, x_n + \beta_n h_n, \cdot), \\
\sum_{\beta': \beta'\leq \alpha} (-1)^{|\alpha|-|\beta'|} {\alpha \choose \beta'} K(x_1 + \beta'_1 h_1,\cdots, x_n + \beta'_n h_n, \cdot)\rangle_{\mathcal{H}_K} = 
\langle f_{\mathbf x},f_{\mathbf x} \rangle_{\mathcal{H}_K} 
\end{split}
\end{equation*}
where $f_{\mathbf x}({\mathbf y}) = \delta^\alpha_{\mathbf h}[K({\mathbf z},{\mathbf y})]({\mathbf x})\in \mathcal{H}_K$ and the finite difference operator $\delta^\alpha_{\mathbf h}$ is applied onto the first argument. 

Let ${\mathbf h} = [h, h, \cdots, h]$ for $h\in {\mathbb R}$. From Lemma~\ref{finite-diff} we conclude that for any $\varepsilon>0$ there is $\delta_\varepsilon>0$ such that 
\begin{equation*}
\begin{split}
|(\frac{\Delta^\alpha_{\mathbf h} [K]({\mathbf x})}{h^{2|\alpha|}})^{1/2} - D_\alpha({\mathbf x})^{1/2}| < \varepsilon
\end{split}
\end{equation*}
whenever $|h|<\delta_\varepsilon$. Therefore,
$$
|\,\|\frac{f_{\mathbf x}}{h^{|\alpha|}}\|_{\mathcal{H}_K}- D_\alpha({\mathbf x})^{1/2}| < \varepsilon
$$
Thus,  $\frac{f_{\mathbf x}}{h^{|\alpha|}}\in \mathcal{H}_K$ and $\frac{f_{\mathbf x}}{h^{|\alpha|}}$ is in a ball of radius $D_\alpha({\mathbf x})^{1/2}+\varepsilon$. 
\else
\fi
Let us choose some sequence $\{h_i\}_{i=1}^\infty$ such that $\lim_{i\to \infty} h_i = 0$ and let $$f_i({\mathbf y}) =\frac{ \delta^\alpha_{(h_i,\cdots, h_i)}[K({\mathbf z},{\mathbf y})]({\mathbf x})}{h_i^{|\alpha|}}\in \mathcal{H}_K$$ where the finite difference operator $\delta^\alpha_{\mathbf h}$ is applied onto the first argument. The inner product between $f_i$ and $f_j$ equals:
\begin{equation*}
\begin{split}
\langle f_i,f_j \rangle_{\mathcal{H}_K} = \frac{\delta^{(\alpha,\alpha)}_{(h_i,\cdots, h_i,h_j,\cdots, h_j)} [K]({\mathbf x},{\mathbf x})}{h_i^{|\alpha|}h_j^{|\alpha|}}
\end{split}
\end{equation*}
Therefore,
\begin{equation*}
\begin{split}
\| f_i-f_j \|^2_{\mathcal{H}_K} = \frac{\delta^{(\alpha,\alpha)}_{(h_i,\cdots, h_i)} [K]({\mathbf x},{\mathbf x})}{h_i^{2|\alpha|}}+\frac{\delta^{(\alpha,\alpha)}_{(h_j,\cdots, h_j)} [K]({\mathbf x},{\mathbf x})}{h_j^{2|\alpha|}}-\\
2\frac{\delta^{(\alpha,\alpha)}_{(h_i,\cdots, h_i,h_j,\cdots, h_j)} [K]({\mathbf x},{\mathbf x})}{h_i^{|\alpha|}h_j^{|\alpha|}}
\end{split}
\end{equation*}
From Lemma~\ref{finite-diff} we obtain that for any $\varepsilon>0$ there exists $N_\varepsilon>0$ such that $|\frac{\delta^{(\alpha,\alpha)}_{(h_i,\cdots, h_i)} [K]({\mathbf x},{\mathbf x})}{h_i^{2|\alpha|}}-D_\alpha({\mathbf x})|<\varepsilon$ and $|\frac{\delta^{(\alpha,\alpha)}_{(h_i,\cdots, h_i,h_j,\cdots, h_j)} [K]({\mathbf x},{\mathbf x})}{h_i^{|\alpha|}h_j^{|\alpha|}}-D_\alpha({\mathbf x})|<\varepsilon$ whenever $i>N_\varepsilon$, $j>N_\varepsilon$. Therefore, $\| f_i-f_j \|^2_{\mathcal{H}_K} \leq 4\varepsilon$ if $i>N_\varepsilon$, $j>N_\varepsilon$. The latter means that $\{f_i\}\subseteq \mathcal{H}_K$ is a Cauchy sequence. From the completeness of $\mathcal{H}_K$ we conclude that $f_i\to^{\mathcal{H}_K} f$ where $f\in \mathcal{H}_K$. From Proposition~\ref{smale} we conclude that $f_i$ uniformly converges to $f$. By construction, the pointwise limit of $\{f_i = \frac{ \delta^\alpha_{(h_i,\cdots, h_i)}[K({\mathbf z},{\mathbf y})]({\mathbf x})}{h_i^{|\alpha|}}\}$ is $\partial^\alpha_{\mathbf x} K({\mathbf x},\cdot)$. Therefore, $f_i\to^{\mathcal{H}_K} \partial^\alpha_{\mathbf x} K({\mathbf x},\cdot)$ and $\partial^\alpha_{\mathbf x} K({\mathbf x},\cdot)\in \mathcal{H}_K$.

Let $f_{\mathbf x}({\mathbf y}) = \delta^\alpha_{\mathbf h}[K({\mathbf z},{\mathbf y})]({\mathbf x})$ for ${\mathbf h} = (h,\cdots,h)$. In fact, we have just proved that $\lim_{h\to 0} \frac{\delta^\alpha_{\mathbf h}[K({\mathbf z},{\mathbf y})]({\mathbf x})}{h^{|\alpha|}} = \partial^\alpha_{\mathbf x} K({\mathbf x},\cdot)$ in $\mathcal{H}_K$.
According to Mercer's theorem, we have
$$
\lim_{N\to \infty} \sup_{{\mathbf z},{\mathbf y}\in  \boldsymbol{\Omega}} |K({\mathbf z}, {\mathbf y})-\sum_{i=1}^N \lambda_i\phi_i({\mathbf z})\phi_i({\mathbf y})| = 0.
$$
A sum of $k$ uniformly convergent function series equals a uniformly convergent series of the corresponding $k$-sums, i.e.
$$
\frac{f_{\mathbf x}({\mathbf y})}{h^{|\alpha|}} = \frac{\delta^\alpha_{\mathbf h}[\lim_{N\to \infty}\sum_{i=1}^N \lambda_i\phi_i({\mathbf z})\phi_i({\mathbf y})]({\mathbf x})}{h^{|\alpha|}} =\lim_{N\to \infty} \sum_{i=1}^N \lambda_i\frac{\delta^\alpha_{\mathbf h}[\phi_i]({\mathbf x})}{h^{|\alpha|}}\phi_i({\mathbf y})
$$
Therefore, $\frac{f_{\mathbf x}({\mathbf y})}{h^{|\alpha|}} =\sum_{i=1}^\infty \lambda_i\frac{\delta^\alpha_{\mathbf h}[\phi_i]({\mathbf x})}{h^{|\alpha|}}\phi_i({\mathbf y})$ and the latter convergence is uniform over ${\mathbf y}$.

Therefore, $\int \frac{f_{\mathbf x}({\mathbf y})}{h^{|\alpha|}} \phi_i({\mathbf y})d\mu({\mathbf y}) = \lambda_i\frac{\delta^\alpha_{\mathbf h}[\phi_i]({\mathbf x})}{h^{|\alpha|}}$. A uniform convergence of $ \frac{f_{\mathbf x}({\mathbf y})}{h^{|\alpha|}}$ to $ \partial^\alpha_{\mathbf x} K({\mathbf x},{\mathbf y})$ as $h\to 0$ implies $$\lambda_i\partial^\alpha_{\mathbf x}\phi_i({\mathbf x}) = \lim_{h\to 0}  \lambda_i\frac{\delta^\alpha_{\mathbf h}[\phi_i]({\mathbf x})}{h^{|\alpha|}} = \int  \partial^\alpha_{\mathbf x} K({\mathbf x},{\mathbf y}) \phi_i({\mathbf y})d\mu({\mathbf y}).$$ Since $\partial^\alpha_{\mathbf x} K({\mathbf x},\cdot)\in \mathcal{H}_K$, using Proposition~\ref{smale}, we conclude:
\begin{equation*}
\begin{split}
\|\partial^\alpha_{\mathbf x} K({\mathbf x},\cdot)\|^2_{\mathcal{H}_K} = \sum_{i=1}^\infty \frac{\lambda^2_i(\partial^\alpha_{\mathbf x}\phi_i({\mathbf x}))^2}{\lambda_i}
\end{split}
\end{equation*}
Since $\lim_{h\to 0}\langle \frac{f_{\mathbf x}}{h^{|\alpha|}}, \frac{f_{\mathbf x}}{h^{|\alpha|}}\rangle_{\mathcal{H}_K} = \lim_{h\to 0}\frac{\Delta^\alpha_{\mathbf h} [K]({\mathbf x})}{h^{2|\alpha|}} = D_\alpha({\mathbf x})$ we finally obtain
\begin{equation*}
\begin{split}
D_\alpha({\mathbf x}) = \|\partial^\alpha_{\mathbf x} K({\mathbf x},\cdot)\|^2_{\mathcal{H}_K} = \sum_{i=1}^\infty \lambda_i(\partial^\alpha_{\mathbf x}\phi_i({\mathbf x}))^2
\end{split}
\end{equation*}
\end{proof}
\begin{lemma}\label{expand} Let $K({\mathbf x}, {\mathbf y})\in C^{2m}(\boldsymbol{\Omega}\times \boldsymbol{\Omega})$ and $\{\lambda_i\}$ be a multiset of all positive eigenvalues of ${\rm O}_K$ (counting multiplicities).
 Then,
$$
\partial^\alpha_{\mathbf x}\partial^\beta_{\mathbf y} K({\mathbf x}, {\mathbf y})  = \sum_{i=1}^\infty \lambda_i\partial^\alpha_{\mathbf x}\phi_i({\mathbf x})\partial^\beta_{\mathbf y}\phi_i({\mathbf y})
$$
 for $|\alpha|\leq m$ and $|\beta|\leq m$. 
\end{lemma}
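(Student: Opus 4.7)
My plan is to bootstrap Lemma~\ref{in-rkhs}, which controls a single $\partial^\alpha_{\mathbf x}$, up to the mixed derivative $\partial^\alpha_{\mathbf x}\partial^\beta_{\mathbf y}$ by applying a finite-difference operator in ${\mathbf y}$ to the Mercer-type expansion of $\partial^\alpha_{\mathbf x} K({\mathbf x},\cdot)$ and then passing to the limit inside the infinite sum via a Cauchy--Schwarz tail estimate.

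\textbf{Expansion of $\partial^\alpha_{\mathbf x} K({\mathbf x},\cdot)$.} Lemma~\ref{in-rkhs} shows $\partial^\alpha_{\mathbf x} K({\mathbf x},\cdot) \in \mathcal{H}_K$, and its proof identifies the $L_2$-Fourier coefficients of this function in the basis $\{\phi_i\}$ as $\lambda_i\,\partial^\alpha_{\mathbf x}\phi_i({\mathbf x})$. Proposition~\ref{smale} upgrades $\mathcal{H}_K$-convergence to uniform convergence, so
$$\partial^\alpha_{\mathbf x} K({\mathbf x},{\mathbf y}) = \sum_{i=1}^\infty \lambda_i\,\partial^\alpha_{\mathbf x}\phi_i({\mathbf x})\,\phi_i({\mathbf y}),$$
with convergence uniform in ${\mathbf y}$ for each fixed ${\mathbf x}$.

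\textbf{Finite difference in ${\mathbf y}$ and termwise limit.} Since $\delta^\beta_{{\mathbf h}'}$ is a finite linear combination of point-evaluation functionals, it commutes with the above uniformly convergent sum. Choosing ${\mathbf h}' = (h',\ldots,h')$ and dividing by $(h')^{|\beta|}$,
$$\frac{\delta^\beta_{{\mathbf h}'}[\partial^\alpha_{\mathbf x} K({\mathbf x},\cdot)]({\mathbf y})}{(h')^{|\beta|}} = \sum_{i=1}^\infty \lambda_i\,\partial^\alpha_{\mathbf x}\phi_i({\mathbf x})\,\frac{\delta^\beta_{{\mathbf h}'}[\phi_i]({\mathbf y})}{(h')^{|\beta|}}.$$
As $h'\to 0$, Proposition~\ref{finite-diff} (valid since $K\in C^{2m}$ with $|\alpha|+|\beta|\leq 2m$) sends the left-hand side to $\partial^\beta_{\mathbf y}\partial^\alpha_{\mathbf x} K({\mathbf x},{\mathbf y}) = \partial^\alpha_{\mathbf x}\partial^\beta_{\mathbf y} K({\mathbf x},{\mathbf y})$, while each $i$-th summand on the right tends to $\lambda_i\partial^\alpha_{\mathbf x}\phi_i({\mathbf x})\partial^\beta_{\mathbf y}\phi_i({\mathbf y})$.

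\textbf{Main obstacle: interchanging $\lim$ and $\sum$.} The delicate point is swapping $\lim_{h'\to 0}$ with the infinite sum. I would control the tail uniformly in $h'$ by Cauchy--Schwarz: for every $N$,
$$\Big|\sum_{i=N+1}^\infty \lambda_i\,\partial^\alpha_{\mathbf x}\phi_i({\mathbf x})\,\frac{\delta^\beta_{{\mathbf h}'}[\phi_i]({\mathbf y})}{(h')^{|\beta|}}\Big|^2 \leq \Big(\sum_{i=N+1}^\infty \lambda_i(\partial^\alpha_{\mathbf x}\phi_i({\mathbf x}))^2\Big)\Big(\sum_{i=1}^\infty \lambda_i\frac{(\delta^\beta_{{\mathbf h}'}[\phi_i]({\mathbf y}))^2}{(h')^{2|\beta|}}\Big).$$
The first factor vanishes as $N\to\infty$ because $\sum_i \lambda_i(\partial^\alpha_{\mathbf x}\phi_i({\mathbf x}))^2 = D_\alpha({\mathbf x})<\infty$ by Lemma~\ref{in-rkhs}. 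Expanding the square inside the second factor and exchanging the finite sum over $\gamma,\gamma'\leq\beta$ with the uniformly convergent Mercer series $\sum_i \lambda_i\phi_i({\mathbf u})\phi_i({\mathbf v}) = K({\mathbf u},{\mathbf v})$ collapses it to $\delta^{(\beta,\beta)}_{({\mathbf h}',{\mathbf h}')}[K]({\mathbf y},{\mathbf y})/(h')^{2|\beta|}$, which by Lemma~\ref{finite-diff} tends to $D_\beta({\mathbf y})$ and hence is bounded for $|h'|$ small. Thus the tail is $o_N(1)$ uniformly in $h'$ near $0$, legitimising the interchange of $\lim$ and $\sum$ and yielding the claimed identity. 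Absolute convergence of the limiting series follows from one more Cauchy--Schwarz: $\sum_i \lambda_i|\partial^\alpha_{\mathbf x}\phi_i({\mathbf x})\partial^\beta_{\mathbf y}\phi_i({\mathbf y})| \leq \sqrt{D_\alpha({\mathbf x})D_\beta({\mathbf y})}$.
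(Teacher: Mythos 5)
Your proof is correct, but it follows a genuinely different route from the paper's. The paper's own argument is shorter: it observes that each $\lambda_i\partial^\alpha_{\mathbf x}\phi_i$ is continuous, uses Lemma~\ref{in-rkhs} to get $\sum_i\lambda_i(\partial^\alpha_{\mathbf x}\phi_i({\mathbf x}))^2=D_\alpha({\mathbf x})$ with $D_\alpha$ continuous, invokes Dini's theorem on the compact $\boldsymbol{\Omega}$ to upgrade this monotone convergence to uniform convergence, dominates the mixed series by $\frac{1}{2}\sum_i\lambda_i\big((\partial^\alpha_{\mathbf x}\phi_i({\mathbf x}))^2+(\partial^\beta_{\mathbf y}\phi_i({\mathbf y}))^2\big)$, and then differentiates the Mercer series term by term via the classical theorem on termwise differentiation of uniformly convergent series (applied through all intermediate multi-indices). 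You instead start from the uniformly convergent expansion $\partial^\alpha_{\mathbf x}K({\mathbf x},\cdot)=\sum_i\lambda_i\partial^\alpha_{\mathbf x}\phi_i({\mathbf x})\phi_i$, which is indeed extractable from Lemma~\ref{in-rkhs} together with Proposition~\ref{smale}, apply a finite difference in ${\mathbf y}$, and justify interchanging $\lim_{h'\to 0}$ with the infinite sum by a Moore--Osgood-type argument: Cauchy--Schwarz plus the identity $\sum_i\lambda_i(\delta^\beta_{{\mathbf h}'}[\phi_i]({\mathbf y}))^2=\delta^{(\beta,\beta)}_{({\mathbf h}',{\mathbf h}')}[K]({\mathbf y},{\mathbf y})$, whose normalized limit $D_\beta({\mathbf y})$ (Lemma~\ref{finite-diff}) keeps the tail small uniformly in $h'$. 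This essentially re-runs, in the second variable, the finite-difference technique the paper already uses inside the proof of Lemma~\ref{in-rkhs}; what it buys is an elementary proof that avoids both Dini's theorem and the termwise-differentiation theorem, at the cost of redoing the difference-quotient analysis, whereas the paper's route also delivers uniform absolute convergence of the differentiated series in both variables jointly, which it reuses later. Two points you should state explicitly: $h'$ must be small enough that the shifted points remain in $\boldsymbol{\Omega}$ (the same implicit interior-point assumption the paper makes), and the final identification uses equality of mixed partials $\partial^\beta_{\mathbf y}\partial^\alpha_{\mathbf x}K=\partial^\alpha_{\mathbf x}\partial^\beta_{\mathbf y}K$, valid since $K\in C^{2m}(\boldsymbol{\Omega}\times\boldsymbol{\Omega})$ and $|\alpha|+|\beta|\leq 2m$.
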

\begin{proof} Again, since $\lambda_i\phi_i({\mathbf x}) = \int_{ \boldsymbol{\Omega}}  K({\mathbf x}, {\mathbf y})\phi_i({\mathbf y})d{\mathbf y}$, we conclude $\lambda_i\partial^\alpha_{\mathbf x}\phi_i({\mathbf x}) = \int_{ \boldsymbol{\Omega}}   \partial^\alpha_{\mathbf x} K({\mathbf x}, {\mathbf y})\phi_i({\mathbf y})d{\mathbf y}\in C(\boldsymbol{\Omega})$ for $|\alpha|\leq m$. 
From Lemma~\ref{in-rkhs} and Dini's theorem we conclude that the series 
$$\sum_{i=1}^\infty \lambda_i|\partial^\alpha_{\mathbf x}\phi_i({\mathbf x})\partial^\beta_{\mathbf y}\phi_i({\mathbf y})|\leq \frac{1}{2}\sum_{i=1}^\infty \lambda_i(\partial^\alpha_{\mathbf x}\phi_i({\mathbf x}))^2+\lambda_i(\partial^\beta_{\mathbf y}\phi_i({\mathbf y}))^2$$
is absolutely and uniformly convergent. Therefore, we can differentiate the function series, and conclude
$$
\sum_{i=1}^\infty \lambda_i\partial^\alpha_{\mathbf x}\phi_i({\mathbf x})\partial^\beta_{\mathbf y}\phi_i({\mathbf y}) = \partial^\alpha_{\mathbf x}\partial^\beta_{\mathbf y} \big(\sum_{i=1}^\infty \lambda_i\phi_i({\mathbf x})\phi_i({\mathbf y})\big)=\partial^\alpha_{\mathbf x}\partial^\beta_{\mathbf y} K({\mathbf x}, {\mathbf y}).
$$ 
\end{proof}
Let us denote 
\begin{equation*}
\begin{split}
K_N({\mathbf x}, {\mathbf y}) =K({\mathbf x}, {\mathbf y})-\sum_{i=1}^N \lambda_i\phi_i({\mathbf x})\phi_i({\mathbf y})
\end{split}
\end{equation*}
and 
\begin{equation*}
\begin{split}
K^{\alpha, \beta}_N({\mathbf x}, {\mathbf y}) =\partial^{\alpha}_{\mathbf x}\partial^{\beta}_{\mathbf y} K({\mathbf x}, {\mathbf y})-\sum_{i=1}^N \lambda_i\partial^{\alpha}_{\mathbf x} \phi_i({\mathbf x})\partial^{\beta}_{\mathbf y} \phi_i({\mathbf y})
\end{split}
\end{equation*}
\begin{lemma}\label{Causchy} Let $K({\mathbf x}, {\mathbf y})\in C^{2m}(\boldsymbol{\Omega}\times \boldsymbol{\Omega})$ for compact $\boldsymbol{\Omega}\subseteq {\mathbb R}^n$ and $\{\lambda_i\}$ be a multiset of all positive eigenvalues of ${\rm O}_K$ (counting multiplicities). Then, for any $|\alpha|\leq m$, $|\beta|\leq m$, we have
$$
|K^{\alpha, \beta}_N({\mathbf x}, {\mathbf y}) | \leq K^{\alpha, \alpha}_N ( {\mathbf x}, {\mathbf x})^{1/2} K^{\beta, \beta}_N ( {\mathbf y}, {\mathbf y})^{1/2}\leq  D_\alpha ({\mathbf x})^{1/2}D_\beta ({\mathbf y})^{1/2}.
$$
\end{lemma}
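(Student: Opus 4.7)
The plan is to combine Lemma~\ref{expand} (which gives a convergent eigenfunction expansion of $\partial^\alpha_{\mathbf x}\partial^\beta_{\mathbf y}K$) with the classical Cauchy--Schwarz inequality for series, and then close the loop with Lemma~\ref{in-rkhs} (which identifies $D_\alpha({\mathbf x})$ with the full tail sum).

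First I would write down, using Lemma~\ref{expand} applied to $K$ and subtracting the first $N$ terms,
$$K^{\alpha,\beta}_N({\mathbf x},{\mathbf y}) = \sum_{i=N+1}^\infty \lambda_i\, \partial^\alpha_{\mathbf x}\phi_i({\mathbf x})\,\partial^\beta_{\mathbf y}\phi_i({\mathbf y}),$$
where the series converges absolutely and uniformly by the estimate established inside the proof of Lemma~\ref{expand}. Splitting $\lambda_i = \sqrt{\lambda_i}\cdot\sqrt{\lambda_i}$ and applying Cauchy--Schwarz to the sequences $\{\sqrt{\lambda_i}\,\partial^\alpha_{\mathbf x}\phi_i({\mathbf x})\}_{i>N}$ and $\{\sqrt{\lambda_i}\,\partial^\beta_{\mathbf y}\phi_i({\mathbf y})\}_{i>N}$ then yields
$$|K^{\alpha,\beta}_N({\mathbf x},{\mathbf y})| \leq \Bigl(\sum_{i=N+1}^\infty \lambda_i (\partial^\alpha_{\mathbf x}\phi_i({\mathbf x}))^2\Bigr)^{1/2} \Bigl(\sum_{i=N+1}^\infty \lambda_i (\partial^\beta_{\mathbf y}\phi_i({\mathbf y}))^2\Bigr)^{1/2}.$$

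Next I would recognize each factor: applying Lemma~\ref{expand} with $\beta = \alpha$ and ${\mathbf y}={\mathbf x}$ (and the analogous statement for $\beta$ at ${\mathbf y}$) identifies
$$K^{\alpha,\alpha}_N({\mathbf x},{\mathbf x}) = \sum_{i=N+1}^\infty \lambda_i (\partial^\alpha_{\mathbf x}\phi_i({\mathbf x}))^2, \qquad K^{\beta,\beta}_N({\mathbf y},{\mathbf y}) = \sum_{i=N+1}^\infty \lambda_i (\partial^\beta_{\mathbf y}\phi_i({\mathbf y}))^2.$$
This gives the first claimed inequality $|K^{\alpha,\beta}_N({\mathbf x},{\mathbf y})|\leq K^{\alpha,\alpha}_N({\mathbf x},{\mathbf x})^{1/2}K^{\beta,\beta}_N({\mathbf y},{\mathbf y})^{1/2}$.

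For the second inequality, I would invoke Lemma~\ref{in-rkhs}, which asserts $D_\alpha({\mathbf x}) = \sum_{i=1}^\infty \lambda_i(\partial^\alpha_{\mathbf x}\phi_i({\mathbf x}))^2$. Since every term of this series is non-negative and $\lambda_i\geq 0$, the truncated tail $K^{\alpha,\alpha}_N({\mathbf x},{\mathbf x})$ is bounded above by the full sum $D_\alpha({\mathbf x})$, and similarly $K^{\beta,\beta}_N({\mathbf y},{\mathbf y})\leq D_\beta({\mathbf y})$. Taking square roots and multiplying yields $K^{\alpha,\alpha}_N({\mathbf x},{\mathbf x})^{1/2}K^{\beta,\beta}_N({\mathbf y},{\mathbf y})^{1/2}\leq D_\alpha({\mathbf x})^{1/2}D_\beta({\mathbf y})^{1/2}$, completing the chain.

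There is no real obstacle here: the heavy lifting has been done in Lemmas~\ref{in-rkhs} and~\ref{expand}. The only small subtlety worth stating carefully is that Cauchy--Schwarz is legitimate because both factor series have non-negative terms that are dominated by those of the convergent series appearing in Lemma~\ref{in-rkhs}, so everything in sight is finite and the manipulations are unconditionally valid.
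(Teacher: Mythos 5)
Your proof is correct and takes essentially the same route as the paper: both expand $K^{\alpha,\beta}_N({\mathbf x},{\mathbf y})$ via Lemmas~\ref{in-rkhs} and~\ref{expand} as the tail series $\sum_{i=N+1}^\infty \lambda_i\,\partial^\alpha_{\mathbf x}\phi_i({\mathbf x})\,\partial^\beta_{\mathbf y}\phi_i({\mathbf y})$, apply Cauchy--Schwarz, identify the resulting diagonal tails with $K^{\alpha,\alpha}_N({\mathbf x},{\mathbf x})$ and $K^{\beta,\beta}_N({\mathbf y},{\mathbf y})$, and bound them by the full sums $D_\alpha({\mathbf x})$, $D_\beta({\mathbf y})$. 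The only cosmetic difference is that the paper runs the Cauchy--Schwarz step inside the RKHS, writing $K^{\alpha,\beta}_N({\mathbf x},{\mathbf y})=\langle \partial^\alpha_{\mathbf x}K_N({\mathbf x},\cdot),\partial^\beta_{\mathbf y}K_N({\mathbf y},\cdot)\rangle_{\mathcal{H}_K}$, while you apply it directly to the coefficient series; by Proposition~\ref{smale} these are the same inequality, so nothing substantive changes.
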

\begin{proof}
From Lemmas~\ref{in-rkhs} and~\ref{expand} we have
$$
\partial^\alpha_{\mathbf x} K_N({\mathbf x}, \cdot) = \sum_{i=N+1}^\infty \lambda_i\partial^\alpha_{\mathbf x} \phi_i({\mathbf x})\phi_i({\mathbf y})\in {\mathcal H}_K.
$$
Using Proposition~\ref{smale}, and again, Lemma~\ref{expand}, we obtain
\begin{equation*}
\begin{split}
\partial^\alpha_{\mathbf x} \partial^\beta_{\mathbf y} K_N({\mathbf x}, {\mathbf y}) =  \sum_{i=N+1}^\infty \lambda_i\partial^\alpha_{\mathbf x} \phi_i({\mathbf x})\partial^\beta_{\mathbf y}\phi_i({\mathbf y})= \langle \partial^\alpha_{\mathbf x} K_N({\mathbf x}, \cdot), \partial^\beta_{\mathbf y} K_N({\mathbf y}, \cdot)\rangle_{\mathcal{H}_K} .
\end{split}
\end{equation*}
Finally, the Cauchy-Schwartz inequality gives us
\begin{equation*}
\begin{split}
 |\langle \partial^\alpha_{\mathbf x} K_N({\mathbf x}, \cdot), \partial^\beta_{\mathbf y} K_N({\mathbf y}, \cdot)\rangle_{\mathcal{H}_K} |\leq \| \partial^\alpha_{\mathbf x} K_N({\mathbf x}, \cdot)\|_{\mathcal{H}_K}\cdot \|\partial^\beta_{\mathbf y} K_N({\mathbf y}, \cdot)\|_{\mathcal{H}_K} = \\ K^{\alpha, \alpha}_N ( {\mathbf x}, {\mathbf x})^{1/2} K^{\beta, \beta}_N ( {\mathbf y}, {\mathbf y})^{1/2}.
\end{split}
\end{equation*}
Note that
\begin{equation*}
\begin{split}
 K^{\alpha,\alpha}_N( {\mathbf x}, {\mathbf y}) = \sum_{i=N+1}^\infty \lambda_i\partial^\alpha_{\mathbf x} \phi_i({\mathbf x})^2 \leq \sum_{i=1}^\infty \lambda_i\partial^\alpha_{\mathbf x} \phi_i({\mathbf x})^2 = D_\alpha ({\mathbf x}).
\end{split}
\end{equation*}
Therefore, we have
\begin{equation*}
\begin{split}
|K^{\alpha, \beta}_N({\mathbf x}, {\mathbf y}) | \leq  D_\alpha ({\mathbf x})^{1/2}D_\beta ({\mathbf y})^{1/2}.
\end{split}
\end{equation*}
\end{proof}

\begin{proof}[Proof of Theorem~\ref{GN-theorem}.]
 The tightness of our bounds strongly depends on the constant $ C_{\boldsymbol{\Omega}, p} $ in the Gagliardo-Nirenberg inequality, which reads as~\cite{Brezis,Nirenberg}
$$
\|u\|_{L_\infty( \boldsymbol{\Omega})}\leq C_{\boldsymbol{\Omega}, p} \|u\|^{1-\theta}_{L_1( \boldsymbol{\Omega})}\cdot \|D^m u\|^{\theta}_{L_p( \boldsymbol{\Omega})}+C_{\boldsymbol{\Omega}, p}\|u\|_{L_1( \boldsymbol{\Omega})},
$$
where $\theta(\frac{n}{p}-m)+(1-\theta)n=0$ and $ \|D^m u\|_{L_p( \boldsymbol{\Omega})} = \max_{\alpha: |\alpha|=m}\|\partial^\alpha_{{\mathbf x}} u({\mathbf x})\|_{L_p( \boldsymbol{\Omega})}$. Thus, $\theta=\frac{n}{n-n/p+m} = (1+\frac{m}{n}-\frac{1}{p})^{-1}$.

Using $\sup|K_N({\mathbf x}, {\mathbf y})|\leq \sup K_N({\mathbf x}, {\mathbf x})^{1/2} K_N({\mathbf y}, {\mathbf y})^{1/2} = \sup K_N({\mathbf x}, {\mathbf x})$ and the Gagliardo-Nirenberg inequality we have
\begin{equation*}
\begin{split}
C_K^N = \|K_N({\mathbf x}, {\mathbf y}) \|_{L_\infty(\boldsymbol{\Omega}\times \boldsymbol{\Omega})} = \|K_N({\mathbf x}, {\mathbf x}) \|_{L_\infty( \boldsymbol{\Omega})} \leq \\
C_{\boldsymbol{\Omega}, p}\|K_N({\mathbf x}, {\mathbf x}) \|^{1-\theta}_{L_1( \boldsymbol{\Omega})}\cdot \|D^m K_N({\mathbf x}, {\mathbf x}) \|^{\theta}_{L_p( \boldsymbol{\Omega})} +C_{\boldsymbol{\Omega}, p}\|K_N({\mathbf x}, {\mathbf x}) \|_{L_1( \boldsymbol{\Omega})}\leq\\ 
C_{\boldsymbol{\Omega}, p} \big(\sum_{i=N+1}^\infty\lambda_i\big)^{1-\theta}\cdot \|D^m K_N({\mathbf x}, {\mathbf x}) \|^{\theta}_{L_p( \boldsymbol{\Omega})} + C_{\boldsymbol{\Omega}, p}\sum_{i=N+1}^\infty\lambda_i.
\end{split}
\end{equation*}
Lemma~\ref{Causchy} gives us 
\begin{equation*}
\begin{split}
|\partial^\alpha_{{\mathbf x}} [K_N({\mathbf x}, {\mathbf x})]| = |\sum_{\beta\leq \alpha}{\alpha \choose \beta}K^{\beta, \alpha-\beta}_N({\mathbf x}, {\mathbf x})| \leq \sum_{\beta\leq \alpha}{\alpha \choose \beta} D_\beta ({\mathbf x})^{1/2}D_{\alpha-\beta} ({\mathbf x})^{1/2}.
\end{split}
\end{equation*} 
Therefore, we have
\begin{equation*}
\begin{split}
C_K^N\leq
C_{\boldsymbol{\Omega}, p} \big(\sum_{i=N+1}^\infty\lambda_i\big)^{1-\theta}\cdot \max_{\alpha: |\alpha|=m}(\sum_{\beta\leq \alpha}{\alpha \choose \beta} \|\sqrt{D_\beta D_{\alpha-\beta}}\|_{L_p(\boldsymbol{\Omega})})^\theta + \\
C_{\boldsymbol{\Omega}, p}\sum_{i=N+1}^\infty\lambda_i.
\end{split}
\end{equation*}
Theorem proved.
\end{proof}

\begin{proof}[Proof of Theorem~\ref{GN-theorem2}.]  Another version of the Gagliardo-Nirenberg inequality, now for the domain $\boldsymbol{\Omega}\times \boldsymbol{\Omega}$, is
$$
\|u\|_{L_\infty( \boldsymbol{\Omega}\times \boldsymbol{\Omega})}\leq D_{\boldsymbol{\Omega}, p} \|u\|^{1-\theta}_{L_2( \boldsymbol{\Omega}\times \boldsymbol{\Omega})}\cdot \|D^m u\|^{\theta}_{L_p( \boldsymbol{\Omega}\times \boldsymbol{\Omega})}+D_{\boldsymbol{\Omega}, p}\|u\|_{L_2( \boldsymbol{\Omega}\times \boldsymbol{\Omega})},
$$
where $\theta(\frac{n}{p}-m)+(1-\theta)\frac{n}{2}=0$. Therefore, $\theta = \frac{n/2}{n/2-n/p+m} = (1+\frac{2m}{n}-\frac{2}{p})^{-1}$.
For $u({\mathbf x}, {\mathbf y}) = K_N({\mathbf x}, {\mathbf y})$, we have
\begin{equation*}
\begin{split}
C_K^N = \|K_N({\mathbf x}, {\mathbf y}) \|_{L_\infty(\boldsymbol{\Omega}\times \boldsymbol{\Omega})} \leq \\ D_{\boldsymbol{\Omega}, p} \|K_N\|^{1-\theta}_{L_2( \boldsymbol{\Omega}\times \boldsymbol{\Omega})}\cdot \|D^m K_N\|^{\theta}_{L_p( \boldsymbol{\Omega}\times \boldsymbol{\Omega})}+D_{\boldsymbol{\Omega}, p}\|K_N\|_{L_2( \boldsymbol{\Omega}\times \boldsymbol{\Omega})}.
\end{split}
\end{equation*}
Using Lemma~\ref{Causchy} we obtain
\begin{equation*}
\begin{split}
\|D^m K_N\|_{L_p} = \max_{|\alpha|+|\beta|=m}\|\partial^\alpha_{{\mathbf x}}\partial^\beta_{{\mathbf y}} [K_N({\mathbf x}, {\mathbf y})]\|_{L_p} \leq \max_{|\alpha|+|\beta|=m} \|\sqrt{D_\alpha }\|_{L_p}\|\sqrt{D_{\beta}}\|_{L_p}.
\end{split}
\end{equation*} 
Therefore,
\begin{equation*}
\begin{split}
C_K^N \leq \\ D_{\boldsymbol{\Omega}, p} (\sum_{i=N+1}^\infty \lambda_i^2)^{(1-\theta)/2}\cdot  \max_{|\alpha|+|\beta|=m} \|\sqrt{D_\alpha }\|^\theta_{L_p}\|\sqrt{D_{\beta}}\|^\theta_{L_p}+D_{\boldsymbol{\Omega}, p}(\sum_{i=N+1}^\infty \lambda_i^2)^{1/2}.
\end{split}
\end{equation*}
Theorem proved.
\end{proof}

\section{Applications}
{\bf Bounding the kernel of ${\rm O}^\gamma_K$.} For $\gamma>0$, let us denote $$K^\gamma({\mathbf x}, {\mathbf y}) = \sum_{i=1}^\infty\lambda_i^{\gamma}\phi_i({\mathbf x})\phi_i({\mathbf y}).$$ In general, checking the condition  $\sup_{{\mathbf x}\in \boldsymbol{\Omega}} K^{\gamma}({\mathbf x},{\mathbf x})<\infty$ requires the study of eigenvectors $\phi_i$. For kernels that appear in applications~\cite{cucker_zhou_2007,A2020_4_263}, a concrete form of eigenvectors is known only in few cases~\cite{Rosasco}. In the current paper we are interested in information that can be extracted from a behavior of eigenvalues $\{\lambda_i\}$. Let us formulate one example of such a sufficient condition.

 Note that if $\sum_{i=1}^\infty \lambda_i^{2\gamma}<\infty$, then $\sum_{i=1}^\infty\lambda_i^{\gamma}\phi_i({\mathbf x})\phi_i({\mathbf y})\in L_2(\boldsymbol{\Omega}\times\boldsymbol{\Omega})$. In a special case $\gamma=\frac{1}{2}$ we have $\sum_{i=1}^\infty \lambda_i^{2\gamma} = {\rm Tr}({\rm O}_{K})<\infty$. Therefore, $K^\gamma\in L_2(\boldsymbol{\Omega}\times\boldsymbol{\Omega})$ for $\gamma\in [\frac{1}{2},1]$. The boundedness of $K^\gamma$ on the diagonal, i.e. $\sup_{{\mathbf x}\in \boldsymbol{\Omega}} K^{\gamma}({\mathbf x},{\mathbf x})<\infty$ is equivalent to $K^\gamma\in C(\boldsymbol{\Omega}\times\boldsymbol{\Omega})$. Indeed, if $K^{\gamma}({\mathbf x},{\mathbf x})<C$, then  $f_N({\mathbf x}) = \sum_{i=1}^N\lambda_i^{\gamma}\phi_i({\mathbf x})^2$ is a monotonically increasing sequence of nonnegative continuous functions on a compact set $\boldsymbol{\Omega}$, bounded by $C$. Then, by monotone convergence theorem, $\{f_N\}$ uniformly converges to a continuous function $K^{\gamma}({\mathbf x},{\mathbf x})$. From the uniform convergence of the series $\sum_{i=1}^\infty\lambda_i^{\gamma}\phi_i({\mathbf x})^2$ it is straightforward that $\sum_{i=1}^\infty\lambda_i^{\gamma}\phi_i({\mathbf x})\phi_i({\mathbf y})<\frac{1}{2}\sum_{i=1}^\infty\lambda_i^{\gamma}(\phi_i({\mathbf x})^2+\phi_i({\mathbf y})^2)$ is also uniformly convergent to a continuous function.

\begin{theorem} Let $K\in C^{2m}(\boldsymbol{\Omega}\times \boldsymbol{\Omega})$ and $\gamma\in (0,1)$. Then, for $\sup_{{\mathbf x}\in \boldsymbol{\Omega}} K^{1-\gamma}({\mathbf x},{\mathbf x})<\infty$ it is sufficient to have $$\sum_{i=N+1}^\infty\lambda_i^2 = o(\lambda_{N}^{\frac{(2m+n)\gamma}{m}})$$ and $$\sum_{N=1}^\infty (\sum_{i=N+1}^\infty\lambda_i^2)^{\frac{m}{2m+n}}(\lambda_{N+1}^{-\gamma}-\lambda_{N}^{-\gamma})<\infty.$$
\end{theorem}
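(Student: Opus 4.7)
The goal is to establish that $\sup_{\mathbf{x} \in \boldsymbol{\Omega}} K^{1-\gamma}(\mathbf{x}, \mathbf{x}) = \sup_{\mathbf{x}} \sum_{i=1}^\infty \lambda_i^{1-\gamma} \phi_i(\mathbf{x})^2$ is finite. The plan is to bootstrap Theorem~\ref{GN-theorem2} (specialized to $p = +\infty$) via Abel summation: that theorem already delivers uniform control of the tail diagonal $K_N(\mathbf{x}, \mathbf{x}) = \sum_{i=N+1}^\infty \lambda_i \phi_i(\mathbf{x})^2$, and the two hypotheses of the statement are precisely what is needed to upgrade this control, after summation by parts, to the $\lambda^{1-\gamma}$-weighted series. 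Concretely, at $p = +\infty$ Theorem~\ref{GN-theorem2} combined with the $C^{2m}$ hypothesis (which forces $\sqrt{D_\alpha}$ to be bounded on $\boldsymbol{\Omega}$) yields a constant $c > 0$ with
\begin{equation*}
\sup_{\mathbf{x} \in \boldsymbol{\Omega}} K_N(\mathbf{x}, \mathbf{x}) \leq C_K^N \leq c\,\Bigl(\sum_{i > N} \lambda_i^2\Bigr)^{m/(2m+n)}
\end{equation*}
for every $N \geq 0$.

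The key manipulation is summation by parts. Writing $\lambda_i \phi_i(\mathbf{x})^2 = K_{i-1}(\mathbf{x}, \mathbf{x}) - K_i(\mathbf{x}, \mathbf{x})$ and telescoping gives
\begin{equation*}
\sum_{i=1}^M \lambda_i^{1-\gamma} \phi_i(\mathbf{x})^2 = \lambda_1^{-\gamma} K(\mathbf{x}, \mathbf{x}) + \sum_{i=1}^{M-1} (\lambda_{i+1}^{-\gamma} - \lambda_i^{-\gamma})\, K_i(\mathbf{x}, \mathbf{x}) - \lambda_M^{-\gamma} K_M(\mathbf{x}, \mathbf{x}).
\end{equation*}
Because $\{\lambda_i\}$ is decreasing each coefficient $\lambda_{i+1}^{-\gamma} - \lambda_i^{-\gamma}$ is nonnegative, and each tail $K_i(\mathbf{x}, \mathbf{x})$ is nonnegative, so the supremum over $\mathbf{x}$ passes inside without loss. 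I then let $M \to \infty$. The boundary term is uniformly bounded by $c\,\lambda_M^{-\gamma}(\sum_{i > M} \lambda_i^2)^{m/(2m+n)}$, which tends to $0$ by the first hypothesis $\sum_{i > N} \lambda_i^2 = o\bigl(\lambda_N^{(2m+n)\gamma/m}\bigr)$. The middle sum is controlled by $c\sum_{i=1}^\infty (\lambda_{i+1}^{-\gamma} - \lambda_i^{-\gamma})(\sum_{j > i} \lambda_j^2)^{m/(2m+n)}$, which is finite by the second hypothesis. Combining the three estimates yields $\sup_{\mathbf{x}} K^{1-\gamma}(\mathbf{x}, \mathbf{x}) < \infty$.

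The main obstacle is really only bookkeeping: one must recognize that Theorem~\ref{GN-theorem2} provides exactly the correct tail estimate for $K_i(\mathbf{x}, \mathbf{x})$, and that Abel summation converts $\sum \lambda_i^{1-\gamma}\phi_i(\mathbf{x})^2$ into a series of nonnegative increments $\lambda_{i+1}^{-\gamma} - \lambda_i^{-\gamma}$ weighted by these tails together with a boundary term. The two hypotheses of the theorem are then identified as, respectively, the vanishing of the boundary term and the summability of the increments. Because every quantity in sight is nonnegative, no delicate cancellation arguments or dominated-convergence subtleties arise, and the supremum in $\mathbf{x}$ commutes with the infinite sum at the end.
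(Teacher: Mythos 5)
Your proposal is correct and follows essentially the same route as the paper: Abel summation of $\sum_i \lambda_i^{1-\gamma}\phi_i(\mathbf{x})^2$ into tail diagonals $K_i(\mathbf{x},\mathbf{x})$, uniform control of those tails via Theorem~\ref{GN-theorem2} at $p=+\infty$, with the first hypothesis killing the boundary term and the second giving summability of the increment series. Your bookkeeping is in fact slightly cleaner than the paper's (you bound the tail by $C_K^N$ rather than the paper's $(C_K^N)^2$, which is the correct estimate).
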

\begin{proof}
Let us denote $K_N({\mathbf x}) = \sum_{i=N}^\infty\lambda_i\phi_i({\mathbf x})^2$.
\begin{equation*}
\begin{split}
K^{1-\gamma}({\mathbf x}, {\mathbf x}) =  \sum_{i=1}^\infty\lambda_i^{-\gamma}\lambda_i\phi_i({\mathbf x})^2 =  
\sum_{i=1}^\infty\lambda_i^{-\gamma} (K_i({\mathbf x})-K_{i+1}({\mathbf x})) = \\
{\rm\,\,using\,\,summation\,\,by\,\,parts\,\,formula} \\
=\lambda_1^{-\gamma}K_1({\mathbf x})-\lim_{N\to+\infty}\lambda_{N}^{-\gamma}K_{N+1}({\mathbf x})+
\sum_{i=2}^\infty K_i({\mathbf x})(\lambda_i^{-\gamma}-\lambda_{i-1}^{-\gamma})\leq \\
\lambda_1^{-\gamma}D^2_K+\lim_{N\to+\infty}\lambda_{N}^{-\gamma}(C^{N}_K)^2+
\sum_{N=1}^\infty (C^N_K)^2(\lambda_{N+1}^{-\gamma}-\lambda_{N}^{-\gamma})
\end{split}
\end{equation*}
In Theorem~\ref{GN-theorem2} it was shown that for $K\in C^{2m}(\boldsymbol{\Omega}\times \boldsymbol{\Omega})$ we have $(C^N_K)^2\leq C(\sum_{i=N+1}^\infty\lambda_i^2)^{\frac{m}{2m+n}}$. Therefore, $\lim_{N\to \infty}\lambda_{N}^{-\gamma}(\sum_{i=N+1}^\infty\lambda_i^2)^{\frac{m}{2m+n}}=0$ and $$\sum_{N=1}^\infty (\sum_{i=N+1}^\infty\lambda_i^2)^{\frac{m}{2m+n}}(\lambda_{N+1}^{-\gamma}-\lambda_{N}^{-\gamma})<\infty$$ is sufficient for $\sup_{{\mathbf x}\in \boldsymbol{\Omega}} K^{1-\gamma}({\mathbf x},{\mathbf x})<\infty$.
\end{proof}

Let us show how to apply the latter bound for infinitely differentiable kernels. In the case of an infinitely differentiable kernel, we have $$(C^N_K)^2\leq C(\sum_{i=N+1}^\infty\lambda_i^2)^{0.5-\varepsilon}$$ for any $\varepsilon>0$. Let us additionally assume that eigenvalues of ${\rm O}_K$ are rapidly vanishing, i.e. $\sum_{i=N+1}^\infty\lambda_i^2 = \mathcal{O}(\lambda^2_{N+1})$ and $\sum_{i=1}^\infty\lambda_i^\varepsilon < \infty$ for any $\varepsilon>0$. Note that these conditions are satisfied for the Gaussian kernel on a box or a ball in ${\mathbb R}^n$, analytic kernels on a finite interval~\cite{Little}. Let $\gamma\in (0, 1)$. 
We have $\lambda_{N}^{-\gamma}(C^{N}_K)^2\leq C\lambda_{N}^{-\gamma}(\sum_{i=N+1}^\infty\lambda_i^2)^{0.5-\varepsilon}=\mathcal{O}(\lambda_{N+1}^{1-2\varepsilon-\gamma})\mathop\to\limits^{N\to\infty}0$, since $\varepsilon$ can be chosen to satisfy $\gamma<1-2\varepsilon$.
Also,  $\sum_{N=1}^\infty (C^N_K)^2(\lambda_{N+1}^{-\gamma}-\lambda_{N}^{-\gamma})\leq C\sum_{N=1}^\infty (\sum_{i=N+1}^\infty\lambda_i^2)^{0.5-\varepsilon}\lambda_{N+1}^{-\gamma}\leq C'\sum_{N=1}^\infty \lambda_{N+1}^{1-2\varepsilon-\gamma} < \infty$. Thus, for $\gamma\in (0, 1]$, $K^\gamma$ is bounded and continuous.

{\bf Bounding the supremum norm of eigenvectors.} The condition $$\sup_{N}\|\phi_{N}\|_{L_\infty(\boldsymbol{\Omega})}<\infty$$ is popular in various statements concerning Mercer kernels, though it is believed to be hard to check. Discussions of that issue can be found in~\cite{Zhou,Regularizationkernel,Steinwart2012}. 

Since $\lambda_{N+1}\phi_{N+1}({\mathbf x})^2\leq K({\mathbf x},{\mathbf x})-\sum_{i=1}^N \lambda_i \phi_i({\mathbf x})^2$, we conclude
\begin{equation*}
\begin{split}
\|\phi_{N+1}\|_{L_\infty(\boldsymbol{\Omega})}\leq \lambda^{-1/2}_{N+1} \sqrt{C^N_K}
\end{split}
\end{equation*}
Thus, any upper bound for $C^N_K$ leads to an upper bound of $\|\phi_{N+1}\|_{L_\infty(\boldsymbol{\Omega})}$. For a uniform boundedness of $\|\phi_{N+1}\|_{L_\infty(\boldsymbol{\Omega})}$ we need $C^N_K = \mathcal{O}(\lambda_{N+1})$. Unfortunately, RHS of our bounds are not $\mathcal{O}(\lambda_{N+1})$, though they can be used to show a moderate growth rate of $\|\phi_{N+1}\|_{L_\infty(\boldsymbol{\Omega})}$.

\bibliographystyle{acm}
\newcommand{\noopsort}[1]{}

\end{document}